\newcommand*\Let[2]{\State #1 $\gets$ #2}
\newcommand*\LetFour[4]{\State #1 $\leftarrow$ #2, #3 $\leftarrow$ #4}
\algrenewcommand\algorithmicrequire{\textbf{Input:}}
\newcommand\vecK{\mathbf{k}}
\newcommand\vecM{\mathbf{m}}
\newcommand\vecS{\mathbf{s}}
\newcommand\vecU{\mathbf{u}}
\newcommand\vecX{\mathbf{x}}
\newcommand\vecO{\boldsymbol{\omega}}
\newcommand\vecL{\boldsymbol{\lambda}}
\newcommand\bbE{\mathbb{E}}
\newcommand\bbR{\mathbb{R}}
\newcommand\bbZ{\mathbb{Z}}
\newcommand{\iD}{d} 
\newcommand{\sS}{n} 
\newcommand{\hRatio}{m} 
\newcommand{\sRatio}{\delta} 
\newtheorem{theorem}{Theorem}
\newtheorem{corollary}{Corollary}
\newtheorem{lemma}{Lemma}
\newtheorem{remark}{Remark}
\definecolor{gold}{RGB}{237,177,32}
\begin{document}

%

%

\onecolumn

\aistatstitle{Minimax Error of Interpolation and Optimal Design of Experiments for Variable Fidelity Data}

\aistatsauthor{ A.Zaytsev \And E.Burnaev}

\aistatsaddress{a.zaytsev@skoltech.ru \And e.burnaev@skoltech.ru}
\vspace{-50px}
\begin{center}
Skolkovo Institute of Science and Technology (Skoltech), \\
Insitute for Information Transmission Problems (IITP RAS)
\end{center}
\vspace{50px}
\begin{abstract}
Engineering problems often involve data sources of variable fidelity with different
costs of obtaining an observation. 
In particular, one can use both a cheap low fidelity
function (e.g. a computational experiment with a CFD code) 
and an expensive high fidelity function (e.g. a wind tunnel experiment)
to generate a data sample in order to construct a regression model of a high fidelity function. 
The key question in this setting is
how the sizes of the high and low fidelity data samples should be selected in order
to stay within a given computational budget and maximize accuracy of the regression
model prior to committing resources on data acquisition. 

In this paper we obtain minimax
interpolation errors for single and variable fidelity scenarios for a multivariate
Gaussian process regression. 
Evaluation of the minimax errors allows us to identify
cases when the variable fidelity data provides better interpolation accuracy than
the exclusively high fidelity data for the same computational budget. 

These results allow us to calculate the optimal shares of variable fidelity data samples under the given computational budget constraint. 
Real and synthetic data experiments suggest
that using the obtained optimal shares often outperforms natural heuristics in terms
of the regression accuracy. 
\vspace{130px}
\end{abstract}

\section{INTRODUCTION}

In some cases sample data for regression modeling has variable fidelity: some data
comes from a high fidelity source, some -- from a low fidelity source~\cite{forrester2007multi}.
While there are many approaches to handle variable fidelity data including transfer
learning~\cite{pan2010survey} and space mapping~\cite{bandler2004space} techniques,
engineers often use cokriging approach~\cite{kennedy2000predicting} based on the Gaussian
process framework~\cite{zaytsev2014properties,rasmussen2006}. 
Numerous applications
of cokriging include geostatistics~\cite{xu1992integrating}, aerospace~\cite{han2013improving},
and engineering~\cite{koziel2014efficient}. 
In this paper we also consider this approach for modeling data, 
obtained from high and low fidelity data sources.

The interest in accuracy of Gaussian process models for single fidelity data dates
back to Wiener and Kolmogorov~\cite{wiener1949extrapolation,kolmogorov1992interpolation}.
They obtained an error at a specified point in the univariate case. 
Further progress in refining this estimate is available in the book by Stein~\cite{stein2012interpolation},
inspired by Ibragimov results~\cite{ibragimov2012gaussian}. 
Recent results expand
this setting by considering a more general interpolation error, equal to the integral
of the squared difference between the true function and an its interpolation over
the domain of interest, see~\cite{van2008rates} for finite sample results in the
multivariate case, and~\cite{golubev13interpolation} for results about the minimax
error of interpolation over an infinite regular univariate sample. 
References~\cite{zhang2015divide,suzuki2012pac,bhattacharya2014anisotropic}, to name a few,
report similar results.

While in case of single fidelity data results are quite well established, there is only one
paper~\cite{zhang2015doesn}, to our knowledge, that investigates the interpolation
error for the variable fidelity data case from a theoretical point of view. 
For a squared exponential covariance function and a squared error at a single point, 
authors identify cases when regression modeling based on variable fidelity data is superior
to using only the high fidelity data. 
Other papers dealing with variable fidelity
regression modeling, \cite{daira2016maximum,bevilacqua2015covariance,pascual2006estimation},
focus on statistical properties of regression parameters estimates, but provide
little insight into understanding how and why the variable fidelity modeling works.

Due to current apparent scarcity of theoretical foundations practitioners usually
adopt heuristic rules in determining sizes of data samples of different fidelity
and quantify when to use the variable
fidelity data~\cite{alexandrov1999optimization,simpson2008design,kuya2011multifidelity};
or they use adaptive design of experiments approaches and surrogate based
optimization directly, see~\cite{ranjan2011follow,kandasamy2016gaussian,burnaev2015adaptive,le2015cokriging} and references therein.

The main contributions of this paper are the following:
\begin{itemize}
    \item \textbf{Minimax interpolation error for the multivariate case.}
    We start with obtaining the interpolation error for the Gaussian process regression
    with a known covariance function. 
    Then we derive the minimax interpolation error
    for functions from a general smoothness class in the multivariate case. 
    This error is a nontrivial generalization of the univariate results obtained in~\cite{golubev13interpolation}.

    \item \textbf{The optimal ratio of sizes of variable fidelity data samples.}
    We obtain the interpolation error for the specified covariance function in the
    variable fidelity case, and then derive the minimax interpolation error in
    the general additive setting (cokriging)~\cite{kennedy2000predicting}.
    With the derived minimax interpolation error we identify when and to which extent
    the variable fidelity regression modeling is beneficial compared to the regression
    modeling using only a high fidelity data under the same computational budget. 
    We calculate the optimal ratio of sizes of variable fidelity data samples given the
    budget constraint. 
    There is a certain gap between the theoretical setup we consider
    and the real world: we consider a setting that uses an infinite grid as a design
    of experiments and requires knowledge of relative complexities of high and low
    fidelity functions to calculate the optimal ratio of sample sizes. 
    Nevertheless these theoretical results are sufficient to provide justification for the corresponding applied algorithm we develop.

    \item \textbf{The technique to select the ratio of sizes of variable fidelity data samples.}
    We elaborate on a method to choose the ratio
    inspired by our theoretical investigations. 
    While the existing approaches usually work in adaptive design of experiments setting and pick points using sufficiently accurate regression models constructed beforehand~\cite{ranjan2011follow}, we offer a method to select sizes of high and low fidelity data samples to fit into a given computational budget and maximize accuracy of a resulting regression model
    prior to spending any significant resources on data generation. 
    Our estimate depends only on the computational cost of variable fidelity data generation and
    on a correlation between high and low fidelity functions. 
    We investigate the applicability of the proposed technique by comparing it to a number of natural baselines on synthetic and real datasets.
\end{itemize}
We provide proofs of all theorems in appendicies.


\section{Minimax Interpolation Error for Gaussian Process Regression}
\label{sec:single_fidelity_total}

In case of Gaussian process regression modeling there is a gap between theoretically
tractable problems and practice. 
Namely, since the main tool for theoretical investigation
is the Fourier transform, 
it is a common approach to consider the design of experiments
based on an infinite grid~\cite{golubev13interpolation,stein2012interpolation}, 
though in many cases the theoretical results are transferable to practical solutions. 
Thus, in this section we consider a design of experiments, belonging to some infinite grid, 
and later in the experimental section we show that our conclusions remain valid 
under finite sample random designs.

\subsection{Interpolation Error}
\label{sec:single_fidelity}

Let $f(\vecX)$ be a stationary Gaussian process on $\bbR^{\iD}$ with a covariance
function $R(\vecX) = \bbE (f(\vecX_0 + \vecX) - \bbE f(\vecX_0 + \vecX))(f(\vecX_0) - \bbE f(\vecX_0))$ and a spectral density $F(\vecO)$
\begin{equation*}
    F(\vecO) = \int_{\bbR^{\iD}}
                e^{2 \pi \mathrm{i} \vecO^{\mathrm{T}} \vecX} R(\vecX)
            d\vecX  \,.
\end{equation*}

Suppose that we know values of realizations of $f(\cdot)$ at the infinite rectangular
grid $D_H = \{\vecX_{\vecK}: \vecX_{\vecK} = H \vecK, \vecK \in \mathbb{Z}^{\iD} \}$,
where $H$ is a diagonal matrix with elements $h_1, \ldots, h_{\iD}$. An example of such
design in the case of the input dimension $\iD = 2$ is provided in Figure~\ref{fig:design}.

\begin{figure}
   \centering
   \includegraphics[width=0.4\textwidth]{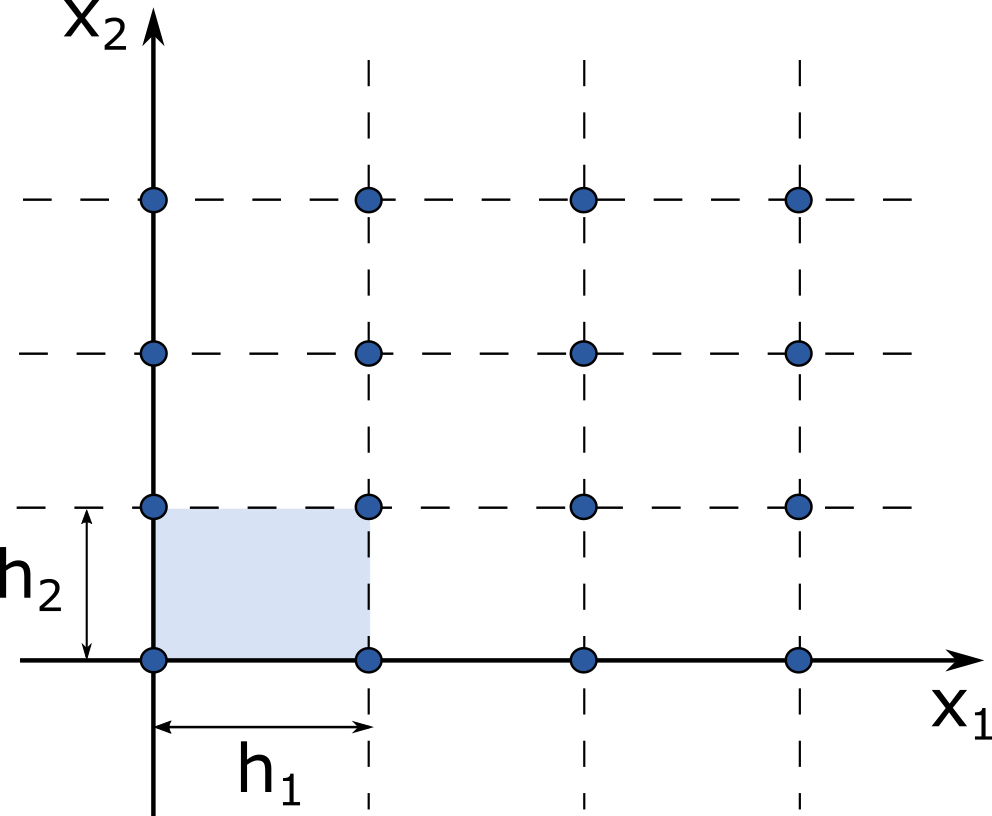}
   \caption{Design $D_H$ for $\iD = 2$.}
   \label{fig:design}
\end{figure}

We measure the interpolation error over the
domain of interest $\Omega_H = [0, h_1] \times \ldots \times [0, h_{\iD}]$ as follows:
\begin{equation} \label{eq:interpolation_error}
    \sigma_H^2(\tilde{f}, F)
        = \frac{1}{\mu(\Omega_H)}
            \int_{\Omega_{H}}
                \bbE \left[\tilde{f}(\vecX) - f(\vecX) \right]^2
            d\vecX \,,
\end{equation}
where $\mu(\Omega_H) = \prod_{i = 1}^{\iD} h_i$ is the Lebesgue measure of $\Omega_H$,
and $\tilde{f}(\vecX)$ is the interpolation of $f(\vecX)$.
Here we consider $\tilde{f}(\vecX)$ having the form 
\begin{equation} \label{eq:tildeF}
    \tilde{f}(\vecX) = \mu(\Omega_H) \sum_{\vecX' \in D_H} K(\vecX - \vecX') f(\vecX_{\vecK}) \,,
\end{equation}
where $K(\cdot)$ is a symmetric kernel.

\begin{theorem} \label{th:interpolation_error}
The error of interpolation with $\tilde{f}(\vecX)$ from~\eqref{eq:tildeF}, based on observations
at points from $D_H$ of a stationary Gaussian process $f(\vecX)$ with spectral density $F(\vecO)$,
is equal to
\begin{align*}
    \sigma_H^2(\tilde{f}, F)
        &= \int_{\bbR^{\iD}}
            F(\vecO) \left[
                \left(1 - \hat{K}(\vecO)\right)^2 + \sum_{\vecX \in D_{H^{-1}} \setminus \{\mathbf{0}\}}
                    \hat{K}^2 \left(\vecO + \vecX \right)
            \right] d\vecO \,,
\end{align*}
where $\hat{K}(\vecO)$ is the Fourier transform of $K(\vecO)$. 
Furthermore, the optimal $\hat{K}(\vecO)$, minimizing the interpolation error, has the form
\begin{equation*}
    \hat{K}(\vecO)
        = \frac{F(\vecO)}{\sum_{\vecX \in D_{H^{-1}}} F \left(\vecO + \vecX \right)}
        \,.
\end{equation*}
\end{theorem}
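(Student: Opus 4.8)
The plan is to transport everything to the spectral domain, where both $\tilde f$ and the interpolation error take a transparent form.

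First I would use the spectral representation $f(\vecX) = \int_{\bbR^{\iD}} e^{2\pi\mathrm{i}\vecO^{\mathrm{T}}\vecX}\,Z(d\vecO)$ of the stationary process, where $Z$ is the random spectral measure with $\bbE|Z(d\vecO)|^2 = F(\vecO)\,d\vecO$ and $Z(-d\vecO) = \overline{Z(d\vecO)}$ because $f$ is real. Substituting this into~\eqref{eq:tildeF} and interchanging the (absolutely convergent) sum over $D_H$ with the stochastic integral gives $\tilde f(\vecX) = \int_{\bbR^{\iD}} G_{\vecX}(\vecO)\,Z(d\vecO)$ with $G_{\vecX}(\vecO) = \mu(\Omega_H)\sum_{\vecK\in\bbZ^{\iD}} K(\vecX - H\vecK)\,e^{2\pi\mathrm{i}\vecO^{\mathrm{T}}H\vecK}$. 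Expressing $K$ through its inverse Fourier transform and applying the Poisson summation formula (equivalently, the Dirac comb identity) $\mu(\Omega_H)\sum_{\vecK}e^{2\pi\mathrm{i}\vecO^{\mathrm{T}}H\vecK} = \sum_{\vecX'\in D_{H^{-1}}}\delta(\vecO - \vecX')$, and then using that the dual grid $D_{H^{-1}}$ is symmetric and that $\hat K$ is even (being the Fourier transform of a symmetric kernel), this collapses to
\[
    G_{\vecX}(\vecO) = \sum_{\vecX'\in D_{H^{-1}}}\hat K(\vecO+\vecX')\,e^{2\pi\mathrm{i}(\vecO+\vecX')^{\mathrm{T}}\vecX}.
\]

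Next, since $f(\vecX)$ is precisely the $\vecX' = \mathbf{0}$ term of this series, I would write $\tilde f(\vecX) - f(\vecX) = \int_{\bbR^{\iD}}\Phi_{\vecX}(\vecO)\,Z(d\vecO)$ with $\Phi_{\vecX}(\vecO) = (\hat K(\vecO) - 1)e^{2\pi\mathrm{i}\vecO^{\mathrm{T}}\vecX} + \sum_{\vecX'\in D_{H^{-1}}\setminus\{\mathbf{0}\}}\hat K(\vecO+\vecX')e^{2\pi\mathrm{i}(\vecO+\vecX')^{\mathrm{T}}\vecX}$. The isometry property of the spectral stochastic integral then yields $\bbE[\tilde f(\vecX) - f(\vecX)]^2 = \int_{\bbR^{\iD}}|\Phi_{\vecX}(\vecO)|^2 F(\vecO)\,d\vecO$, and plugging this into~\eqref{eq:interpolation_error} and swapping the order of integration by Tonelli (both $F$ and $|\Phi_{\vecX}|^2$ are nonnegative) makes the error equal to $\int_{\bbR^{\iD}}F(\vecO)\bigl[\frac{1}{\mu(\Omega_H)}\int_{\Omega_H}|\Phi_{\vecX}(\vecO)|^2\,d\vecX\bigr]d\vecO$. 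The heart of the computation is the inner average: expanding $|\Phi_{\vecX}(\vecO)|^2$ produces cross terms carrying the factor $e^{2\pi\mathrm{i}(\vecX'-\vecX'')^{\mathrm{T}}\vecX}$ with $\vecX',\vecX''\in D_{H^{-1}}$, and since $\vecX' - \vecX'' = H^{-1}\vecN$ for some $\vecN\in\bbZ^{\iD}$, averaging over the box $\Omega_H = [0,h_1]\times\cdots\times[0,h_{\iD}]$ annihilates every term with $\vecN\neq\mathbf{0}$, leaving only the diagonal, so $\frac{1}{\mu(\Omega_H)}\int_{\Omega_H}|\Phi_{\vecX}(\vecO)|^2\,d\vecX = (1-\hat K(\vecO))^2 + \sum_{\vecX\in D_{H^{-1}}\setminus\{\mathbf{0}\}}\hat K^2(\vecO+\vecX)$. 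This is the claimed error identity.

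For the optimal kernel I would re-fold the series: shifting the integration variable by $\vecX$ in each summand of $\int F(\vecO)\sum_{\vecX\neq\mathbf{0}}\hat K^2(\vecO+\vecX)\,d\vecO$ and using $\sum_{\vecX\neq\mathbf{0}}F(\vecO-\vecX) = \sum_{\vecX\neq\mathbf{0}}F(\vecO+\vecX)$ (symmetry of $D_{H^{-1}}$), the error rewrites as a single integral $\int_{\bbR^{\iD}}\bigl[F(\vecO)(1-\hat K(\vecO))^2 + \hat K^2(\vecO)\sum_{\vecX\in D_{H^{-1}}\setminus\{\mathbf{0}\}}F(\vecO+\vecX)\bigr]d\vecO$. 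Its integrand involves $\hat K$ only through the value $\hat K(\vecO)$, so the minimization decouples pointwise: for fixed $\vecO$ the scalar quadratic $F(\vecO)(1-k)^2 + k^2 S(\vecO)$ with $S(\vecO) = \sum_{\vecX\in D_{H^{-1}}\setminus\{\mathbf{0}\}}F(\vecO+\vecX)$ is minimized at $k = F(\vecO)/(F(\vecO)+S(\vecO)) = F(\vecO)/\sum_{\vecX\in D_{H^{-1}}}F(\vecO+\vecX)$, which is the asserted $\hat K(\vecO)$.

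The main obstacle is not the algebra but the rigor of the spectral-domain steps: interchanging the infinite sum over $D_H$ with the stochastic integral, and making the Poisson summation step legitimate — it is a distributional identity that needs decay/regularity hypotheses on $K$ and $F$ (or the interpolant to lie in a suitable $L^2$ space) together with a truncation-and-limit argument. Once those interchanges and the parity/symmetry bookkeeping are under control, the rest — expanding squares, the orthogonality of the characters $e^{2\pi\mathrm{i}(\vecX'-\vecX'')^{\mathrm{T}}\vecX}$ on $\Omega_H$, the re-folding, and the one-variable quadratic minimization — is routine.
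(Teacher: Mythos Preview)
Your proposal is correct and follows essentially the same route as the paper: both transport to the spectral domain (you via the random spectral measure $Z$, the paper via the identity $\bbE|\sum c_j f(\vecX_j)|^2 = \int F(\vecO)|\sum c_j e^{-2\pi\mathrm{i}\vecO^{\mathrm{T}}\vecX_j}|^2\,d\vecO$), apply Poisson summation to collapse the lattice sum, use orthogonality of the exponentials $e^{2\pi\mathrm{i}(H^{-1}\vecK)^{\mathrm{T}}\vecX}$ over $\Omega_H$ to kill the cross terms, and then re-fold and minimize the resulting pointwise quadratic in $\hat K(\vecO)$. Your write-up is in fact a bit more explicit than the paper's about the re-folding shift and the regularity caveats, but the mathematical content is the same.
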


\begin{remark} \label{lemma:best_approximation}
The function $\tilde{f}(\vecX)$ that minimizes the squared error $\bbE (\tilde{f}(\vecX) - f(\vecX))^2$ has the form~\eqref{eq:tildeF}, where $K(\cdot)$ is a symmetric kernel. 
This motivates us to use $\tilde{f}(\vecX)$ from \eqref{eq:tildeF} for interpolation.
\end{remark}

\begin{remark}
It is easy to see that for $\tilde{f}(\vecX)$ from \eqref{eq:tildeF} it holds that
\begin{equation*}
    \sigma_H^2(\tilde{f}, F) = \sigma_{SH}^2(\tilde{f}, F) \,,
\end{equation*}
where $S = \mathrm{diag}(s_1, \ldots, s_{\iD})$, with $s_i \in \bbZ^{+}, i = 1, \ldots, \iD$.
\end{remark}

Using Theorem~\ref{th:interpolation_error} one can estimate interpolation errors for various
covariance functions. For example,
\begin{corollary} \label{col:exponential_error}
For a Gaussian process on $\bbR$ with exponential spectral density $F_{\theta}(\omega) = \frac{\theta}{\theta^2 + \omega^2}$
the interpolation error~\eqref{eq:interpolation_error} has the form:
\begin{equation*}
    \sigma_h^2(\tilde{f}, F_{\theta}) \approx \frac{2}{3} \pi^2 \theta h + O((\theta h)^2),\,\, 
    \theta h \rightarrow 0 \,.
\end{equation*}
\end{corollary}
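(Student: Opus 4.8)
The plan is to combine Theorem~\ref{th:interpolation_error} with its explicit optimal kernel, collapse the resulting expression to a single scalar integral, and then perform its small-$\theta h$ asymptotic expansion.

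First, I would substitute the optimal kernel $\hat K(\omega) = F(\omega)/S(\omega)$ from Theorem~\ref{th:interpolation_error}, writing $S(\omega) = \sum_{x \in D_{h^{-1}}} F(\omega + x)$, into the error formula of the same theorem. Shifting the integration variable in each term of the aliasing sum, $\sum_{x \neq 0} \int_{\bbR} F(\omega)\hat K^2(\omega + x)\,d\omega = \int_{\bbR} \hat K^2(\omega)\bigl(S(\omega) - F(\omega)\bigr)\,d\omega$ (where symmetry of the dual grid $D_{h^{-1}}$ is used), one finds that the cross terms telescope and the error takes the compact form $\sigma_h^2(\tilde f, F) = \int_{\bbR} F(\omega)\bigl(1 - F(\omega)/S(\omega)\bigr)\,d\omega$.

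Next, I would specialize to the exponential spectral density $F_\theta(\omega) = \theta/(\theta^2 + \omega^2)$ and evaluate $S_\theta$ in closed form using the classical identity $\sum_{k \in \bbZ} \bigl(a^2 + (x+k)^2\bigr)^{-1} = \frac{\pi}{a}\,\frac{\sinh 2\pi a}{\cosh 2\pi a - \cos 2\pi x}$. Passing to the dimensionless variables $u = \omega h$ and $\eps = \theta h$, this turns the error into
\begin{equation*}
\sigma_h^2(\tilde f, F_\theta) = \theta h \int_{\bbR} \frac{1}{\eps^2 + u^2}\left(1 - \frac{\eps}{\pi(\eps^2 + u^2)}\cdot\frac{\cosh 2\pi\eps - \cos 2\pi u}{\sinh 2\pi\eps}\right) du \,.
\end{equation*}
The bracketed factor vanishes to order $\eps^2 + u^2$ near the origin --- a cancellation visible already from the quadratic Taylor terms of $\cosh$ and $\cos$ --- which is precisely what renders the integral convergent. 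Letting $\eps \to 0$ under the integral sign (justified by a dominating bound of order $\min(1, u^{-2})$ uniform in $\eps$) leaves the $\eps$-independent constant $C = \int_{\bbR} u^{-2}\bigl(1 - (1 - \cos 2\pi u)/(2\pi^2 u^2)\bigr)\,du$.

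Finally, I would compute $C$ explicitly. The substitution $t = 2\pi u$ recasts it as $4\pi \int_{\bbR} t^{-4}\bigl(\cos t - 1 + t^2/2\bigr)\,dt$, and two integrations by parts with vanishing boundary terms reduce this to $\tfrac{2\pi}{3}\int_{\bbR} t^{-2}(1 - \cos t)\,dt = \tfrac{2\pi}{3}\cdot\pi$, so $C = \tfrac{2}{3}\pi^2$ and $\sigma_h^2(\tilde f, F_\theta) = \tfrac{2}{3}\pi^2\theta h\,(1 + o(1))$. I expect the main obstacle to be making the limit step fully rigorous: one must bound the near-cancellation at $u = 0$ uniformly in $\eps$, and to upgrade the $o(1)$ to the stated $O((\theta h)^2)$ remainder one has to carry the Taylor expansions of $\cosh$, $\cos$ and $\sinh$ one order further and control the resulting remainder integral uniformly in $\eps$; the remaining computations are routine.
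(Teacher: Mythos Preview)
Your argument is correct and reaches the same leading constant $\tfrac{2}{3}\pi^2$, but the route differs from the paper's. Both start from the compact form $\sigma_h^2=\int F(1-F/S)\,d\omega$ and the closed-form Poisson sum for $S$. The paper then evaluates the integral \emph{exactly}: it splits $1-F/S$ and reduces everything to the three standard integrals $\int \frac{\theta}{\theta^2+\omega^2}\,d\omega$, $\int \frac{\theta^2}{(\theta^2+\omega^2)^2}\,d\omega$, and $\int \frac{\theta^2\sin^2(\pi h\omega)}{(\theta^2+\omega^2)^2}\,d\omega$, obtaining a closed-form expression in $\theta h$ involving hyperbolic functions, whose Taylor expansion immediately yields both the leading term and the $O((\theta h)^2)$ remainder. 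You instead rescale, pass to the limit $\eps\to 0$ under the integral via dominated convergence, and compute the resulting constant by two integrations by parts and the Dirichlet integral $\int t^{-2}(1-\cos t)\,dt=\pi$. Your approach is lighter and more conceptual for the leading coefficient---it avoids the somewhat delicate evaluation of the $\sin^2$ integral---but, as you correctly note, it only gives $o(1)$ directly; upgrading to $O((\theta h)^2)$ requires a second pass with higher-order Taylor terms, whereas the paper's exact evaluation delivers the sharp remainder for free.
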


\begin{corollary} \label{col:squared_exponential_error}
For a Gaussian process on $\bbR$ with squared exponential spectral density $F_{\theta}(\omega) = \frac{1}{\sqrt{\theta}} \exp \left(-\frac{\omega^2}{2 \theta}\right)$
the interpolation error~\eqref{eq:interpolation_error} is bounded by:
\begin{align*}
    &\frac{4}{3} h \sqrt{\theta} \exp \left(-\frac{1}{8 h^2 \theta}\right)
        \leq \sigma^2_{h}(\tilde{f}, F_{\theta}) 
    \leq 7 h \sqrt{\theta} \exp \left(-\frac{1}{8 h^2 \theta} \right),\,\, \theta h^2 \rightarrow 0
    \,.
\end{align*}
\end{corollary}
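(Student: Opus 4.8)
The plan is to reduce Theorem~\ref{th:interpolation_error} to a single integral of Gaussians and then carry out a Laplace-type analysis. First I would plug the optimal $\hat K(\omega)$ from Theorem~\ref{th:interpolation_error} back into the error. Writing $S(\omega) = \sum_{k\in\bbZ} F_\theta(\omega + k/h)$ for the aliased spectral density, expanding $(1-\hat K)^2$, interchanging the sum over $k\neq 0$ with the integral and shifting $\omega\mapsto\omega-k/h$ termwise collapses the bracket in Theorem~\ref{th:interpolation_error} and leaves the compact ``aliasing error'' identity
\[
  \sigma^2_h(\tilde f, F_\theta) \;=\; \int_{\bbR} F_\theta(\omega)\left(1 - \frac{F_\theta(\omega)}{S(\omega)}\right) d\omega
  \;=\; \int_{\bbR} \frac{a(\omega)\,b(\omega)}{a(\omega)+b(\omega)}\, d\omega ,
\]
where $a = F_\theta$ and $b(\omega) = S(\omega) - F_\theta(\omega) = \sum_{k\neq 0} F_\theta(\omega + k/h)$. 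Thus the error is the integral of the harmonic mean of the central Gaussian bump and the sum of its shifted copies; in particular $\tfrac12\int_{\bbR}\min(a,b)\,d\omega \le \sigma^2_h \le \int_{\bbR}\min(a,b)\,d\omega$, which already captures the order of the answer and, at the lower end, comfortably yields a constant above $4/3$.

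Next I would localize. Put $\beta = (h\sqrt\theta)^{-1}$, so the regime $\theta h^2\to0$ is $\beta\to\infty$ and the bumps of width $\sqrt\theta$ sit $1/h = \beta\sqrt\theta$ apart. Completing the square shows $F_\theta(\omega)F_\theta(\omega\mp1/h)$ is a Gaussian in $\omega$ concentrated around $\pm1/(2h)$ with peak amplitude $\theta^{-1}e^{-1/(4h^2\theta)}$, while $F_\theta(\pm1/(2h)) = \theta^{-1/2}e^{-1/(8h^2\theta)}$; consequently the integrand of $\sigma^2_h$ is exponentially dominated by the two midpoints $\omega = \pm1/(2h)$ between the central bump and its nearest aliases, where both $a$ and $b$ are of size $\theta^{-1/2}e^{-1/(8h^2\theta)}$. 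Near each midpoint only the two nearest Gaussians matter: replacing $b(\omega)$ by the single term $F_\theta(\omega\mp1/h)$ changes things by a factor $1+O(e^{-c\beta^2})$, and the complementary region (the bulk near $\omega=0$ and the farther gaps) contributes only $O(e^{-c'\beta^2})$ with $c'>1/8$, which I would bound crudely via $\tfrac{ab}{a+b}\le\min(a,b)$ and Gaussian tail estimates.

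On the piece around $\omega = 1/(2h)$ I would substitute $\omega = 1/(2h) + 2h\theta\, s$. The square-completion then gives $a\approx\theta^{-1/2}e^{-1/(8h^2\theta)}e^{-s-2h^2\theta s^2}$ and $b\approx\theta^{-1/2}e^{-1/(8h^2\theta)}e^{s-2h^2\theta s^2}$, hence $\tfrac{ab}{a+b}\approx \tfrac12\,\theta^{-1/2}e^{-1/(8h^2\theta)}\,e^{-2h^2\theta s^2}/\cosh s$, and with $d\omega = 2h\theta\,ds$ the local contribution equals
\[
  h\sqrt\theta\; e^{-1/(8h^2\theta)} \int_{\bbR} \frac{e^{-2h^2\theta s^2}}{\cosh s}\, ds \,.
\]
As $\theta h^2\to0$ this last integral increases monotonically to $\int_{\bbR}(\cosh s)^{-1}ds = \pi$ and stays above an explicit positive constant for $\theta h^2$ small; the symmetric piece at $\omega=-1/(2h)$ contributes the same. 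Summing the two and absorbing the $O(e^{-c\beta^2})$ corrections into the constants gives $c_-\,h\sqrt\theta\,e^{-1/(8h^2\theta)}\le\sigma^2_h\le c_+\,h\sqrt\theta\,e^{-1/(8h^2\theta)}$, and one verifies that $c_-=4/3$ and $c_+=7$ are admissible for all sufficiently small $\theta h^2$ --- the true leading constant being $2\pi$, safely strictly between them.

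The exponential rate $e^{-1/(8h^2\theta)}$ is not where the difficulty lies: it is forced by the overlap of two equal Gaussians a distance $1/h$ apart, each equal to $\theta^{-1/2}e^{-1/(8h^2\theta)}$ at their midpoint. The main obstacle is the bookkeeping that turns this into the stated inequalities with honest numerical constants, uniformly over every regime with $\theta h^2$ small: (i) truncating the theta-like series $S(\omega)$ to its dominant term near each midpoint, (ii) discarding $\omega$ away from $\pm1/(2h)$, and (iii) replacing $\int_{\bbR} e^{-2h^2\theta s^2}/\cosh s\,ds$ by constants, while checking that the slack between $4/3$, $2\pi$ and $7$ really does swallow all of the accumulated error terms.
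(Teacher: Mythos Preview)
Your argument is correct, and it takes a genuinely different route from the paper's proof. The paper works directly with the formula $\sigma_h^2=\int F(\omega)\,\frac{\sum_{k\ne0}F(\omega+k/h)}{\sum_k F(\omega+k/h)}\,d\omega$, sets $v=1/h$, and proceeds by brute-force interval splitting. For the upper bound it cuts $\bbR$ into $(-\infty,-v/2]\cup(-v/2,v/2]\cup(v/2,\infty)$, bounds the outer pieces by the Gaussian tail $\int_{v/2}^\infty F$, and on the middle piece controls the series $\sum_{k\ne0}F(\omega+kv)$ by integral comparison together with the Abramowitz--Stegun two-sided bounds on the complementary error function; the constant $7$ drops out of this bookkeeping. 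For the lower bound it restricts to $|\omega|\le v/2$, keeps only the two nearest aliases, and then uses the crude inequality $F(\omega)+F(\omega+v)+F(\omega-v)\le 3F(\omega)$ on that interval to reduce to $\tfrac23\int_{v/2}^{3v/2}F$, whence the $4/3$. No saddle point is ever located and no asymptotic constant is identified.

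Your Laplace-type localisation at the midpoints $\pm 1/(2h)$, with the substitution $\omega=1/(2h)+2h\theta s$ revealing the $1/\cosh s$ profile, is sharper: it produces the actual leading coefficient $2\pi$, and the stated constants $4/3$ and $7$ are then seen simply as slack around it. The price is that steps (i)--(iii) in your last paragraph need to be made quantitative to recover \emph{these particular} constants rather than just some $c_\pm$, whereas in the paper's argument the numbers $4/3$ and $7$ are the natural output of the chosen inequalities. Your harmonic-mean sandwich $\tfrac12\min(a,b)\le \tfrac{ab}{a+b}\le\min(a,b)$ is a clean device the paper does not use, and it alone already yields the right order with little effort.
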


\subsection{Minimax Interpolation Error}
\label{subsec:minimax_inter_error}

For many covariance functions direct evaluation of the interpolation error can be
technically cumbersome, especially for the case $\iD > 1$. Furthermore, in many cases
the true covariance function is not known exactly, and calculating the interpolation error in
such misspecified cases is even a harder task.

Instead we consider a minimax interpolation error that provides an answer in the worst
case scenario. We define a set $\mathcal{F}(L, \vecL)$ of spectral densities $F(\vecO)$
for a given $\vecL = (\lambda_1, \ldots, \lambda_{\iD})\in\bbR^{\iD}$ and $L > 0$ as
\begin{equation} \label{eq:Fset}
    \mathcal{F}(L, \vecL)
        = \left\{ F\,:\,
            \bbE \sum_{i = 1}^{\iD} \lambda_i^2 \left( \frac{\partial f_F(\vecX)}{\partial x_i} \right)^2
            \leq L \,, \vecX \in \mathbb{R}^{\iD}
        \right\} \,,
\end{equation}
where $f(\vecX) = f_F(\vecX)$ is a Gaussian process with the spectral density $F(\vecO)$ at the point $\vecX \in \mathbb{R}^{\iD}$. 
Sample realizations of Gaussian processes for different $L$ in the case of $\iD = 1$
and the Mat\'ern covariance function \cite{rasmussen2006} are shown in Figure~\ref{fig:gp_realizations}.

\begin{figure*}
    \centering
    \begin{subfigure}[b]{0.3\textwidth}
        \includegraphics[height=90px]{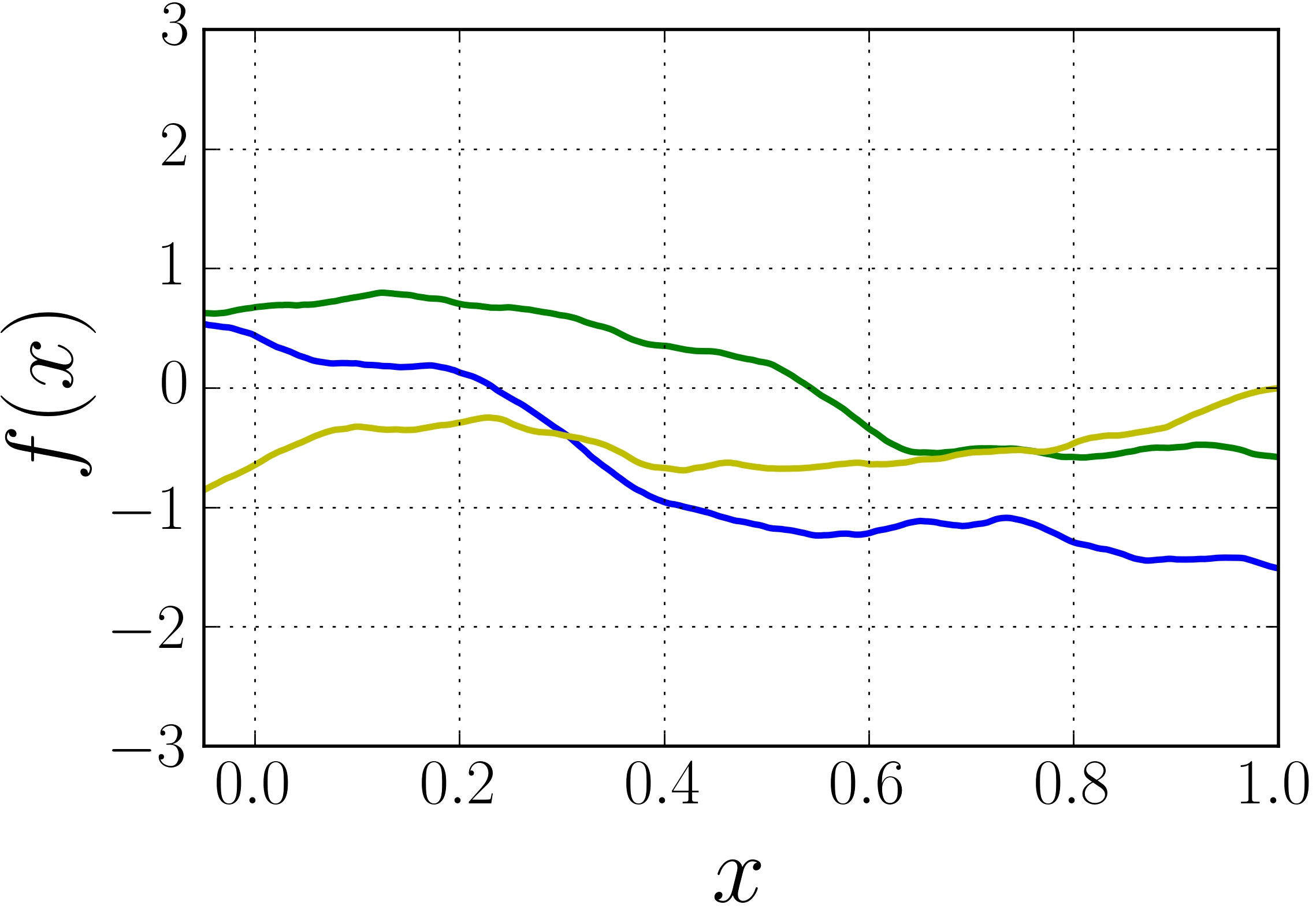}
        \caption{$\mathcal{F}(10, 1)$}       
    \end{subfigure}
    ~
    \begin{subfigure}[b]{0.3\textwidth}
        \includegraphics[height=90px,right]{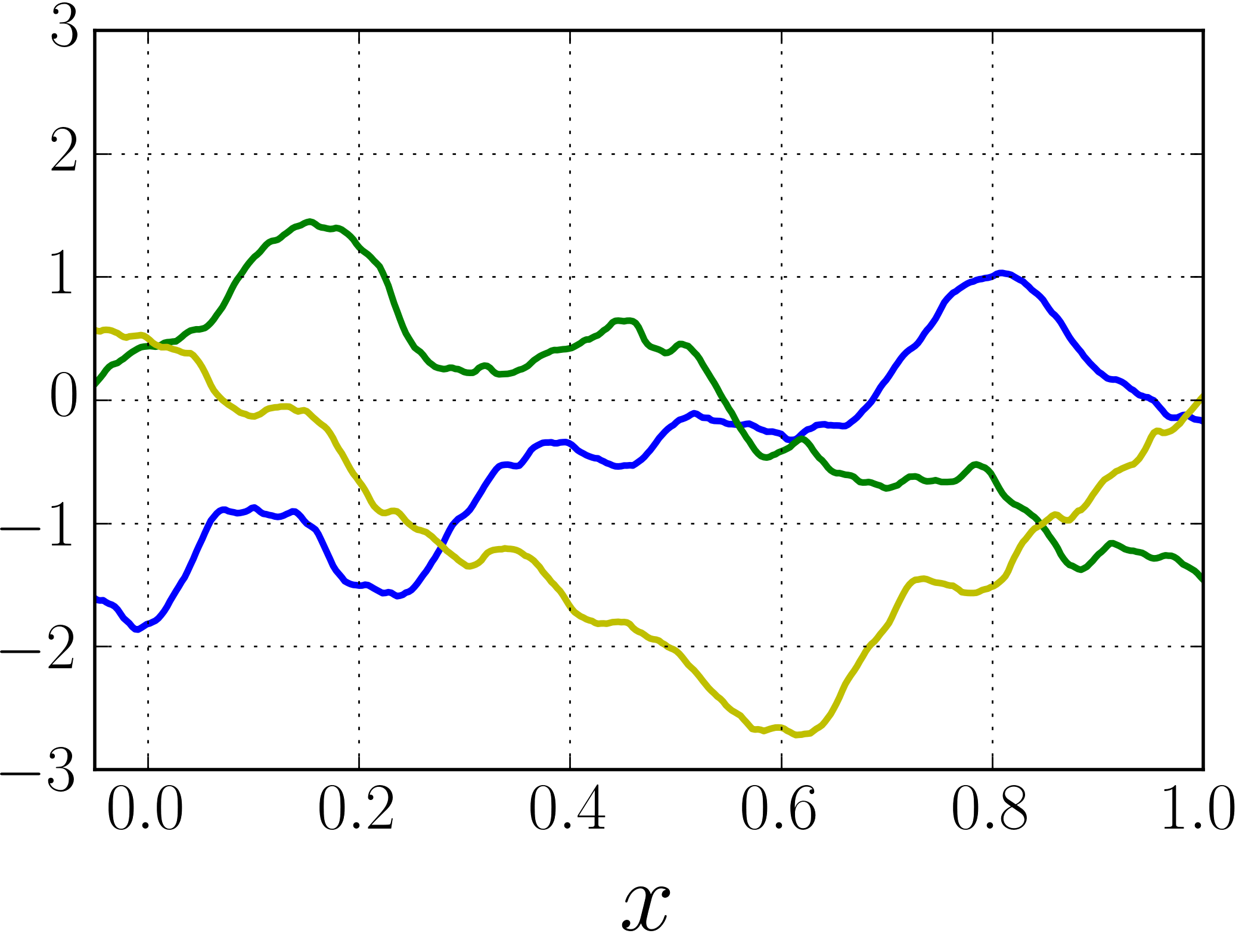}
        \caption{$\mathcal{F}(100, 1)$}
    \end{subfigure}
    ~
    \begin{subfigure}[b]{0.3\textwidth}
        \includegraphics[height=90px,right]{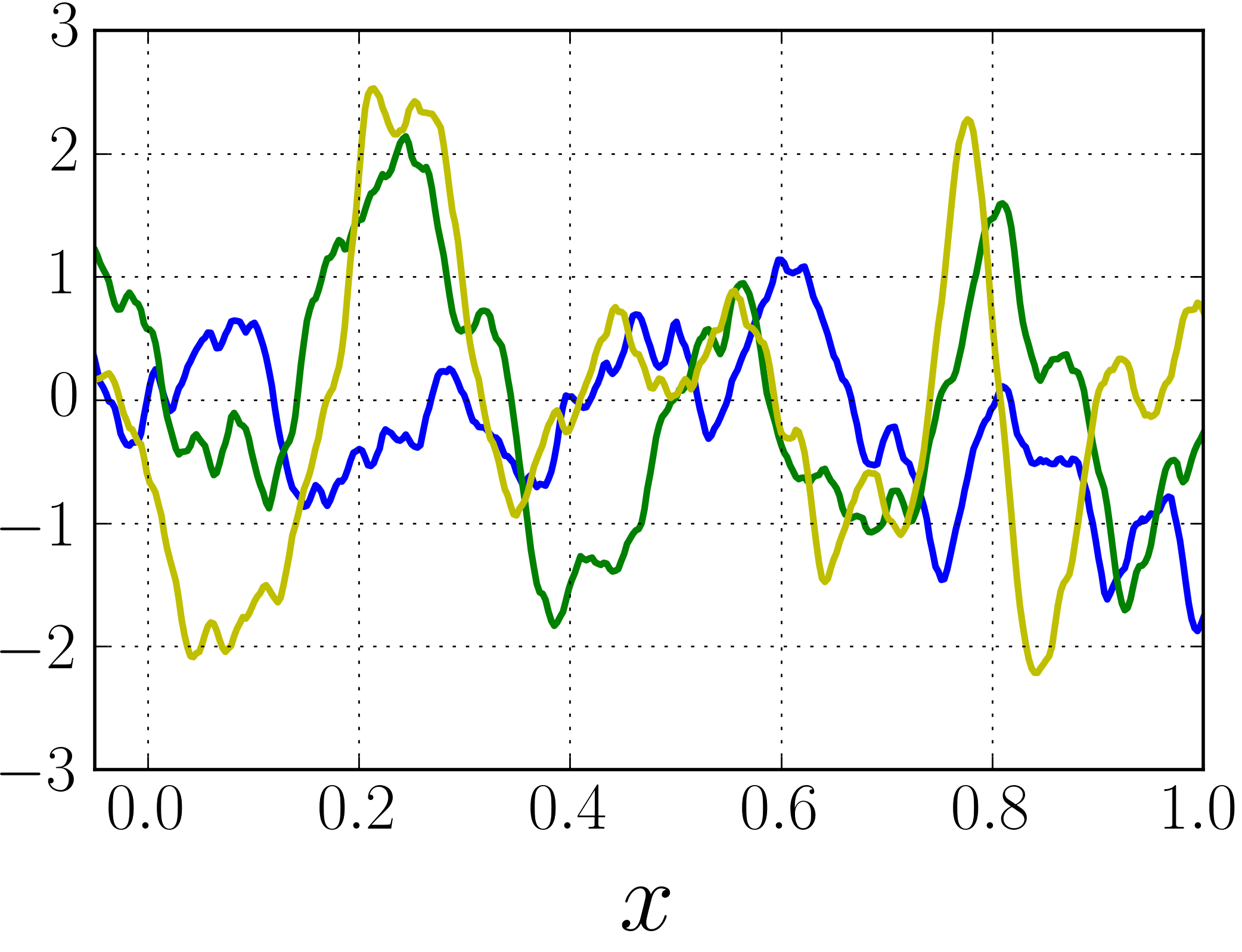}
        \caption{$\mathcal{F}(1000, 1)$}
    \end{subfigure}
    \caption{Realizations of Gaussian processes with the Mat\'ern covariance function
    $R(x) = (1 + \sqrt{3} \theta |x|)\exp(-\sqrt{3}\theta |x|)$ ($\nu=\frac{3}{2}$)
    for different values of $L$ in $\mathcal{F}(L, 1)$ and $\iD = 1$.}
    \label{fig:gp_realizations}
\end{figure*}


The minimax interpolation error is defined as follows:
\begin{equation*}
    R^H(L, \vecL)
        = \inf_{\tilde{f}} \sup_{F \in \mathcal{F}(L, \vecL)}
            \sigma^2_H(\tilde{f}, F) \,.
\end{equation*}
This error describes how large the interpolation error is for the worst case scenario.
The following theorem holds:
\begin{theorem} \label{th:multivariateMinimax}
For a Gaussian process $f(\vecX)$, defined on $\mathbb{R}^{\iD}$ and evaluated on the
design $D_H$, with the spectral density belonging to the set $\mathcal{F}(L, \vecL)$,
the minimax interpolation error has the form
\begin{equation*}
    R^H(L, \vecL)
        = \frac{L}{2 \pi^2}
            \max_{i \in \{1, \ldots, d\}}
                \left(\frac{h_i}{\lambda_i} \right)^2
            \,.
\end{equation*}
Moreover, the minimax optimal interpolation $\tilde{f}(\vecX)$ has the form
\begin{equation*}
    \tilde{f}(\vecX) = \mu(\Omega_H) \sum_{\vecX' \in D_H} K(\vecX - \vecX') f(\vecX') \,, 
\end{equation*}
where $K(\vecX)$ is a symmetric kernel with the Fourier transform $\hat{K}(\vecO)$
defined as
\begin{equation*}
\hat{K}(\vecO) = 
    \begin{cases}
     1 - \sqrt{\sum_{i = 1}^{\iD} \omega_i^2\cdot h_i^2}
        &\text{ if } \sum_{i = 1}^{\iD} \omega_i^2\cdot h_i^2 \leq 1\,, \\ 
     0, &\text{ otherwise} \,.
    \end{cases}
\end{equation*}
\end{theorem}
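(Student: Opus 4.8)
The plan is to reduce the stochastic minimax to a deterministic one over the Fourier transform $\hat K$ of the interpolation kernel, and then to trap the deterministic problem between a lower bound produced by a spectral density concentrated at the ``half-grid'' frequency of the worst coordinate and an upper bound produced by the cone-shaped kernel in the statement. Write $\Phi_{\hat K}(\vecO)$ for the bracket appearing in Theorem~\ref{th:interpolation_error}, so that $\sigma_H^{2}(\tilde f,F)=\int_{\bbR^{\iD}}F(\vecO)\Phi_{\hat K}(\vecO)\,d\vecO$ for $\tilde f$ of the form~\eqref{eq:tildeF}. Since the spectral density of $\partial f_F/\partial x_i$ is $4\pi^{2}\omega_i^{2}F(\vecO)$, the class $\mathcal F(L,\vecL)$ from~\eqref{eq:Fset} is $\{F\ge 0:\ 4\pi^{2}\int_{\bbR^{\iD}}(\sum_i\lambda_i^{2}\omega_i^{2})F(\vecO)\,d\vecO\le L\}$; it is convex and $F\mapsto\int F\Phi_{\hat K}$ is linear, so a standard linear-programming argument (concentrating the spectral mass where the ratio is largest) gives $\sup_{F\in\mathcal F(L,\vecL)}\sigma_H^{2}(\tilde f,F)=\frac{L}{4\pi^{2}}\sup_{\vecO}\frac{\Phi_{\hat K}(\vecO)}{\sum_i\lambda_i^{2}\omega_i^{2}}$. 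Using Remark~\ref{lemma:best_approximation} to restrict to interpolations of the form~\eqref{eq:tildeF}, the theorem is therefore equivalent to the deterministic identity $\inf_{\hat K}\sup_{\vecO}\frac{\Phi_{\hat K}(\vecO)}{\sum_i\lambda_i^{2}\omega_i^{2}}=2\max_i\frac{h_i^{2}}{\lambda_i^{2}}$.

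\emph{Lower bound.} Let $i^{\ast}$ maximize $h_i/\lambda_i$ and put $\vecO^{\ast}=\frac{1}{2h_{i^{\ast}}}\mathbf e_{i^{\ast}}$. For spectral densities concentrating at $\pm\vecO^{\ast}$ with total mass $c=Lh_{i^{\ast}}^{2}/(\pi^{2}\lambda_{i^{\ast}}^{2})$ the constraint in~\eqref{eq:Fset} holds with equality, and keeping in $\Phi_{\hat K}$ only the term $\vecX=-\frac{1}{h_{i^{\ast}}}\mathbf e_{i^{\ast}}\in D_{H^{-1}}\setminus\{\mathbf 0\}$, using $\vecO^{\ast}-\frac{1}{h_{i^{\ast}}}\mathbf e_{i^{\ast}}=-\vecO^{\ast}$ together with the symmetry of $\hat K$, the error of any form-\eqref{eq:tildeF} rule against such a process is at least $c\bigl[(1-\hat K(\vecO^{\ast}))^{2}+\hat K^{2}(\vecO^{\ast})\bigr]\ge c/2$, because $(1-x)^{2}+x^{2}\ge\tfrac12$; by Remark~\ref{lemma:best_approximation} no interpolation does better. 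Hence $R^{H}(L,\vecL)\ge c/2=\frac{L}{2\pi^{2}}\max_i(h_i/\lambda_i)^{2}$. Intuitively: against a spectrum concentrated at one frequency and its mirror, grid interpolation must reconstruct a Fourier mode from its alias, and the best split costs $1/2$.

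\emph{Upper bound.} Insert the kernel of the statement, $\hat K(\vecO)=\max\{1-(\sum_i\omega_i^{2}h_i^{2})^{1/2},0\}$. The crux is the \emph{pointwise} bound $\Phi_{\hat K}(\vecO)\le 2\sum_i\omega_i^{2}h_i^{2}$ for all $\vecO$. Granting it, $\frac{\Phi_{\hat K}(\vecO)}{\sum_i\lambda_i^{2}\omega_i^{2}}\le\frac{2\sum_i h_i^{2}\omega_i^{2}}{\sum_i\lambda_i^{2}\omega_i^{2}}\le 2\max_i\frac{h_i^{2}}{\lambda_i^{2}}$, the last step because the middle term is a weighted average of the ratios $h_i^{2}/\lambda_i^{2}$ with nonnegative weights $\lambda_i^{2}\omega_i^{2}$. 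Thus $\sup_F\sigma_H^{2}(\tilde f,F)\le\frac{L}{2\pi^{2}}\max_i(h_i/\lambda_i)^{2}$, which meets the lower bound, proves the formula, and certifies this kernel as minimax optimal.

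\emph{Main obstacle.} Everything hinges on the pointwise bound $\Phi_{\hat K}(\vecO)\le 2\sum_i\omega_i^{2}h_i^{2}$, and this is where I expect the real work. Under the affine substitution $\vecU=H\vecO$ it becomes the dimension-free lattice inequality $(1-\kappa(\vecU))^{2}+\sum_{\vecK\in\bbZ^{\iD}\setminus\{\mathbf 0\}}\kappa^{2}(\vecU+\vecK)\le 2\|\vecU\|^{2}$, where $\kappa(\cdot)=\max\{1-\|\cdot\|,0\}$ is the Euclidean ``hat'' and $\|\cdot\|$ the Euclidean norm. For $\|\vecU\|\le 1$ the first term is exactly $\|\vecU\|^{2}$, so one must show that the aliasing sum $\sum_{\vecK\ne\mathbf 0}\kappa^{2}(\vecU+\vecK)$ does not exceed $\|\vecU\|^{2}$; this is sharp (equality along the coordinate axes and, to second order, at the origin along every direction), so no slack is available. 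I would fix the sign pattern of $\vecU$, note that only the shifts $\vecK=-\mathbf 1_{A}$ with $\emptyset\ne A\subseteq\{1,\dots,\iD\}$ can make $\kappa$ nonzero, write each contributing square as $\bigl(1-\sqrt{\|\vecU\|^{2}+|A|-2\sum_{i\in A}u_i}\bigr)^{2}$, and bound the sum term by term using $1-\sqrt{s}=\frac{1-s}{1+\sqrt{s}}$ and the fact that each fixed-$A$ term decreases in $\|\vecU\|^{2}$, the singletons $A=\{j\}$ accounting for the mass $u_j^{2}$; the range $\|\vecU\|>1$ (where the first term equals $1$ and the right-hand side exceeds $2$) is easier and degenerates, as $\vecU\to\pm\mathbf e_i$, to the boundary of the previous case. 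In one dimension all of this collapses to the elementary identity $\inf_{\hat K}\sup_t\bigl[(1-\hat K(t))^{2}+\sum_{k\ne0}\hat K^{2}(t+k/h)\bigr]/t^{2}=2h^{2}$, attained by the tent kernel, which recovers the univariate minimax error of~\cite{golubev13interpolation}.
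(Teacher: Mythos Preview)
Your overall architecture is exactly the paper's: the same reduction to $\inf_{\hat K}\sup_{\vecO}\Phi_{\hat K}(\vecO)/\sum_i\lambda_i^{2}\omega_i^{2}$, the same lower bound via a spectral density concentrated at $\pm\frac{1}{2h_{i^\ast}}\mathbf e_{i^\ast}$ together with $(1-x)^2+x^2\ge\tfrac12$, and the same upper-bound kernel with the same pointwise target $\Phi_{\hat K}(\vecO)\le 2\sum_i h_i^{2}\omega_i^{2}$. So structurally there is nothing to correct.

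The genuine gap is precisely where you flag it: the aliasing inequality $\sum_{\vecK\ne\mathbf 0}\kappa^{2}(\vecU+\vecK)\le\|\vecU\|^{2}$. Your sketch (``fix the sign pattern, only shifts $-\mathbf 1_A$ contribute, use $1-\sqrt s=(1-s)/(1+\sqrt s)$, each fixed-$A$ term decreases in $\|\vecU\|^{2}$, singletons account for the mass $u_j^{2}$'') does not add up to a proof. The monotonicity you invoke is in $\|\vecU\|^{2}$ \emph{with $\sum_{i\in A}u_i$ held fixed}, and these are not independent; the singleton term for $A=\{j\}$ equals $u_j^{2}$ only when all other coordinates vanish, not in general; and you still need to absorb the $2^{\iD}-\iD-1$ non-singleton terms, which do not obviously sum to something nonpositive. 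The inequality is sharp along every coordinate axis and to second order at the origin, so there is no slack to exploit by crude term-by-term bounding.

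The paper handles this differently, and it is worth knowing how. It proves, by induction on $\iD$, the stronger parametric statement
\[
\sum_{\substack{\vecK\in\bbZ^{\iD}\setminus\{\mathbf 0\}\\ c^{2}+\|\vecU+\vecK\|^{2}\le 1}}\Bigl(1-\sqrt{c^{2}+\|\vecU+\vecK\|^{2}}\Bigr)^{2}\ \le\ \sum_{\substack{i:\ c^{2}+(1-u_i)^{2}\le 1}}\Bigl(1-\sqrt{c^{2}+(1-u_i)^{2}}\Bigr)^{2}
\]
for $\|\vecU\|_\infty<1$ and $c\ge 0$; the case $c=0$ gives what you need after the trivial bound $(1-\sqrt{(1-u_i)^2})^2=u_i^2$. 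The induction step peels off the first coordinate and pairs, for each $i\ge 2$, the two terms with $u_1$ and $1-u_1$ inside the root; the pairing is controlled by a dedicated two-variable lemma showing
\[
\Bigl(1-\sqrt{c^{2}+\omega^{2}}\Bigr)^{2}+\Bigl(1-\sqrt{c^{2}+(1-\omega)^{2}}\Bigr)^{2}\le (1-c)^{2}
\]
on the relevant range. This lemma is itself proved by a (somewhat laborious) calculus argument. The region $\|\vecU\|_\infty\ge 1$ is then reduced to the previous case, as you anticipated. If you want to pursue your direct combinatorial route instead, you will need a replacement for this pairing lemma; without it the sketch does not close.
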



Note, that we can minimize the minimax interpolation error w.r.t. the diagonal matrix
$H$ in such a way as to keep fixed the average number of points belonging to a unit
hypercube: $ \prod_{i = 1}^{\iD} \frac{1}{h_i} = n$. The corresponding optimal matrix
$H^* = \mathrm{diag}(h_1^*, \ldots, h_d^*)$ has the form:
\begin{equation*}
    h_i^* = \sqrt[d]{\frac{n \lambda^{\iD}_i}{\prod_{j = 1}^{\iD} \lambda_j}} \,.
\end{equation*}
The minimal minimax interpolation error is then equal to
$
    R^{H^*}(L, \vecL) = \frac{L}{2 \pi^2}
        \sqrt[\frac{d}{2}]{\frac{n}{\prod_{i = 1}^{\iD} \lambda_i}} \,.
$

\section{Minimax Interpolation Error for a Variable Fidelity Model}
\label{sec:minimax_risk}

\subsection{Variable Fidelity Data Model}
\label{sec:model_vfgp}

Suppose that the true function is modelled as
\begin{equation} \label{eq:eqqq}
    u(\vecX) = \rho f(\vecX) + g(\vecX) \,,
\end{equation}
where $\rho$ is a fixed constant, and $f(\vecX)$ and $g(\vecX)$ are stationary independent
Gaussian processes, defined on $\mathbb{R}^{\iD}$. This is the state-of-the-art cokriging
approach used to model a variable fidelity data~\cite{kennedy2000predicting}.

We refer to a realization of $u(\vecX)$ as a high fidelity function, and to a realization
of $f(\vecX)$ as a low fidelity function. 
Therefore $g(\vecX)$ is a correction of $f(\vecX)$
that appears due to a low fidelity nature of $f(\vecX)$. 
The parameter $\rho$ provides
information on a strength of the relation between $f(\vecX)$ and $u(\vecX)$.

We observe values of $u(\vecX)$ and $f(\vecX)$ and we want to construct an interpolation
$\tilde{u}(\vecX)$ of the high fidelity function $u(\vecX)$ on the basis of these variable
fidelity observations.

\subsection{Interpolation Error}
\label{subsec:error_vfgp}

It is natural to assume that we observe the cheap low fidelity function $f(\vecX)$
on denser grid than the expensive high fidelity function $u(\vecX)$. We observe $u(\vecX)$
at points from $D_u = D_H$, and $f(\vecX)$ at points from $D_f = D_{\frac{H}{\hRatio}}$
with $\hRatio \in \mathbb{Z}^{+}$.

Using these observations we attempt to interpolate $u(\vecX)$ within the hypercube
$[0, h_1] \times \ldots \times [0, h_{\iD}]$. The function $\tilde{u}(\vecX)$
minimizes the interpolation error $\sigma^2_{H, \hRatio}(\tilde{u}, F, G, \rho)$
for observations of $u(\vecX)$ over $D_H$ and observations of $f(\vecX)$ over
$D_{\frac{H}{\hRatio}}$:
\begin{equation} \label{eq:error_vfgp}
    \sigma^2_{H, \hRatio}(\tilde{u}, F, G, \rho)
        = \frac{1}{\mu(\Omega_H)} \int_{\Omega_H}
                \mathbb{E} \left[\tilde{u}(\vecX) - u(\vecX) \right]^2
            d\vecX \,.
\end{equation}

\begin{theorem} \label{th:vfgp_error}
The minimum of interpolation error~\eqref{eq:error_vfgp} of the variable fidelity data model
$u(\vecX)$ from \eqref{eq:eqqq}, based on observations of $u(\vecX)$ at points from
$D_H$ and observations of $f(\vecX)$ at points from $D_{\frac{H}{\hRatio}}$, has the form:
\begin{equation} \label{eq:error_ready}
    \sigma^2_{H, \hRatio}(\tilde{u}, F, G, \rho)
        = \sigma^2_{H}(\tilde{g}, G) + \rho^2 \sigma^2_{\frac{H}{\hRatio}}(\tilde{f}, F)
        \,,
\end{equation}
where $\tilde{g}(\vecX)$ and $\tilde{f}(\vecX)$ minimize $\sigma^2_{H}(\tilde{g}, G)$ and $\sigma^2_{\frac{H}{\hRatio}}(\tilde{f}, F)$ respectively.
\end{theorem}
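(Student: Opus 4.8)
The idea is to reduce the variable fidelity problem to two \emph{independent} single fidelity interpolation problems, and then read off the answer from Theorem~\ref{th:interpolation_error} together with the periodicity remark. First I would exploit the nesting of the grids. Since $\hRatio \in \mathbb{Z}^{+}$, we have $D_H \subseteq D_{\frac{H}{\hRatio}}$, so the observations $\{u(\vecX_{\vecK}) : \vecX_{\vecK} \in D_H\}$ and $\{f(\vecX_{\vecK}) : \vecX_{\vecK} \in D_{\frac{H}{\hRatio}}\}$ carry exactly the same information as $\{g(\vecX_{\vecK}) = u(\vecX_{\vecK}) - \rho f(\vecX_{\vecK}) : \vecX_{\vecK} \in D_H\}$ together with $\{f(\vecX_{\vecK}) : \vecX_{\vecK} \in D_{\frac{H}{\hRatio}}\}$, because each data set is a deterministic invertible linear transformation of the other. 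Hence the optimal $\tilde{u}$ may be sought as a function of the latter data.

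Next, since $f$ and $g$ are jointly Gaussian, the minimum mean squared error interpolation of $u(\vecX) = \rho f(\vecX) + g(\vecX)$ is the conditional expectation given the data, which splits as $\tilde{u}(\vecX) = \rho\,\bbE[f(\vecX)\mid \text{data}] + \bbE[g(\vecX)\mid \text{data}]$. Here independence of $f$ and $g$ does the real work: the $g$-observations are uninformative about $f$, so $\bbE[f(\vecX)\mid \text{data}] = \tilde{f}(\vecX)$, the optimal interpolation of $f$ from $D_{\frac{H}{\hRatio}}$; symmetrically the $f$-observations are uninformative about $g$, so $\bbE[g(\vecX)\mid \text{data}] = \tilde{g}(\vecX)$, the optimal interpolation of $g$ from $D_H$. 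By Remark~\ref{lemma:best_approximation} both are of the kernel form~\eqref{eq:tildeF}, and therefore $\tilde{u} = \rho \tilde{f} + \tilde{g}$.

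Then I would expand the pointwise squared error $\bbE[\tilde{u}(\vecX) - u(\vecX)]^2 = \bbE[\rho(\tilde{f}(\vecX) - f(\vecX)) + (\tilde{g}(\vecX) - g(\vecX))]^2$. The residual $\tilde{f} - f$ is a function of $f$ alone and $\tilde{g} - g$ of $g$ alone, so by independence (and centering of the processes) the cross term factorizes and vanishes, leaving $\rho^2\bbE[\tilde{f}(\vecX) - f(\vecX)]^2 + \bbE[\tilde{g}(\vecX) - g(\vecX)]^2$. Averaging over $\Omega_H$, the $g$-term is exactly $\sigma^2_H(\tilde{g}, G)$. For the $f$-term, the residual error $\bbE[\tilde{f}(\vecX) - f(\vecX)]^2$ is periodic with periods $h_i/\hRatio$ in the coordinates (the design $D_{\frac{H}{\hRatio}}$ is invariant under such shifts and $f$ is stationary), so, by the same averaging argument that underlies the periodicity remark, its average over $\Omega_H$ equals its average over $\Omega_{\frac{H}{\hRatio}}$, i.e. $\sigma^2_{\frac{H}{\hRatio}}(\tilde{f}, F)$. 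Adding the two pieces yields~\eqref{eq:error_ready}.

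The main obstacle is the decoupling step: one must argue carefully that the optimizer of the joint interpolation problem is the sum of the optimizers of the two single fidelity problems on their reduced data. This rests on two facts used in tandem — the Gaussian characterization of the optimal interpolation as a conditional expectation, and the independence of $f$ and $g$, which both removes the cross-information (so each conditional expectation depends only on the relevant grid of observations) and kills the cross term in the error decomposition. The rescaling of the integration domain from $\Omega_H$ to $\Omega_{\frac{H}{\hRatio}}$ via periodicity is the remaining point requiring a brief justification.
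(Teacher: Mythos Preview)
Your proposal is correct and follows essentially the same route as the paper: both exploit the nesting $D_H \subseteq D_{\frac{H}{\hRatio}}$ together with the independence of $f$ and $g$ to decouple the problem into two single fidelity interpolations, and then average using periodicity of the pointwise error. The only difference is packaging --- you transform the data $(u|_{D_H}, f|_{D_{\frac{H}{\hRatio}}}) \leftrightarrow (g|_{D_H}, f|_{D_{\frac{H}{\hRatio}}})$ and invoke conditional expectations to read off $\tilde{u} = \rho\tilde{f} + \tilde{g}$, whereas the paper writes $\tilde{u}$ as an explicit linear combination $\sum_i k_i u(\vecX_i) + \sum_j \tilde{k}_j f(\tilde{\vecX}_j)$ and achieves the equivalent decoupling by the change of coefficients $\tilde{k}'_j = \tilde{k}_j/\rho + k_i$ at overlapping grid points.
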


\subsection{Minimax Interpolation Error}

We obtain the minimax interpolation error for the variable fidelity case in the manner
similar to the single fidelity case. 
Let us assume that the true spectral densities
of the processes $f(\cdot)$ and $g(\cdot)$ are unknown, but sufficiently smooth, i.e.
they belong to classes $\mathcal{F}(L_f) = \mathcal{F}(L_f, \mathbf{1})$ and
$\mathcal{F}(L_g) = \mathcal{F}(L_g, \mathbf{1})$ respectively. 
Here for clarity of
the presentation we limit ourselves to the case $\vecL = \mathbf{1}\in\mathbb{R}^{\iD}$
and $H = h \mathrm{I}$ for some $h > 0$, where $\mathrm{I}\in\mathbb{R}^{\iD\times\iD}$.
In fact, results below hold in a more general setting, described in section~\ref{sec:single_fidelity_total}
and defined by general values of $\vecL \in\mathbb{R}^{\iD}$ and $H$. 
However, this
additional sophistication blurs the main conclusions and provides little additional
insight.

The goal is to obtain the minimax interpolation error for $u(\vecX)$.
In particular we want to get the minimax interpolation error for the variable fidelity
data
\begin{equation} \label{eq:risk_vfgp}
    R^{h, \hRatio} (L_f, L_g)
        = \inf_{f, g}
            \sup_{\substack{F \in \mathcal{F}(L_f), \\ G \in \mathcal{F}(L_g)}}
                \sigma^2_{h\mathrm{I}, \hRatio} (\tilde{u}, F, G, \rho) \,.
\end{equation}

\begin{theorem} \label{th:multifidelityMinimax}
Minimax interpolation error~\eqref{eq:risk_vfgp} of 
model~\eqref{eq:eqqq}, based on observations of $u(\vecX)$ at points
from $D_H$ and observations of $f(\vecX)$ at points from $D_{\frac{H}{\hRatio}}$,
has the form
\begin{equation} \label{eq:minimax_error}
    R^{h, \hRatio} (L_f, L_g)
        = \rho^2 \frac{L_f}{2} \left(\frac{h}{\hRatio \pi} \right)^{2}
            + \frac{L_g}{2} \left(\frac{h}{\pi} \right)^{2} \,.
\end{equation}
\end{theorem}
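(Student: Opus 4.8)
The plan is to decompose the variable-fidelity minimax error into two decoupled single-fidelity minimax errors and then invoke Theorem~\ref{th:multivariateMinimax}. First I would apply Theorem~\ref{th:vfgp_error}. Since $m\in\mathbb{Z}^{+}$ gives $D_H\subseteq D_{H/m}$, observing $u(\vecX)$ on $D_H$ together with $f(\vecX)$ on $D_{H/m}$ is equivalent to observing $f(\vecX)$ on $D_{H/m}$ and $g(\vecX)=u(\vecX)-\rho f(\vecX)$ on $D_H$; because $f$ and $g$ are independent, the best interpolation has the form $\tilde u=\rho\tilde f+\tilde g$ and
\begin{equation*}
\inf_{\tilde u}\sigma^2_{h\mathrm{I},m}(\tilde u,F,G,\rho)
= \rho^2\,\sigma^2_{(h/m)\mathrm{I}}(\tilde f^{\star},F) + \sigma^2_{h\mathrm{I}}(\tilde g^{\star},G),
\end{equation*}
where $\tilde f^{\star}$ and $\tilde g^{\star}$ are the single-fidelity optimal interpolations of Theorem~\ref{th:interpolation_error}. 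Hence~\eqref{eq:risk_vfgp} reduces to $\inf_{\tilde f,\tilde g}\sup_{F\in\mathcal{F}(L_f),\,G\in\mathcal{F}(L_g)}\big[\rho^2\sigma^2_{(h/m)\mathrm{I}}(\tilde f,F)+\sigma^2_{h\mathrm{I}}(\tilde g,G)\big]$.

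Second, I would split this quantity into a sum. The two summands are decoupled: the first depends only on $(\tilde f,F)$, the second only on $(\tilde g,G)$, and $F$, $G$ range independently over $\mathcal{F}(L_f)$ and $\mathcal{F}(L_g)$. Therefore, for every fixed pair $(\tilde f,\tilde g)$ the supremum over the product class distributes as $\sup_{F,G}[\cdot]=\rho^2\sup_F\sigma^2_{(h/m)\mathrm{I}}(\tilde f,F)+\sup_G\sigma^2_{h\mathrm{I}}(\tilde g,G)$, which already yields $\ge\rho^2 R^{(h/m)\mathrm{I}}(L_f,\mathbf 1)+R^{h\mathrm{I}}(L_g,\mathbf 1)$ after taking $\inf_{\tilde f,\tilde g}$. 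The matching upper bound follows by substituting the two minimax-optimal kernels of Theorem~\ref{th:multivariateMinimax} — crucially, these do not depend on the particular spectral density inside the smoothness class — so that
\begin{equation*}
R^{h,m}(L_f,L_g)=\rho^2\,R^{(h/m)\mathrm{I}}(L_f,\mathbf 1)+R^{h\mathrm{I}}(L_g,\mathbf 1).
\end{equation*}

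Finally I would plug in the closed form of Theorem~\ref{th:multivariateMinimax} for $\vecL=\mathbf 1$: $R^{h\mathrm{I}}(L_g,\mathbf 1)=\frac{L_g}{2\pi^2}h^2=\frac{L_g}{2}\big(\frac{h}{\pi}\big)^2$ and $R^{(h/m)\mathrm{I}}(L_f,\mathbf 1)=\frac{L_f}{2\pi^2}\big(\frac{h}{m}\big)^2=\frac{L_f}{2}\big(\frac{h}{m\pi}\big)^2$, and adding them gives exactly~\eqref{eq:minimax_error}. The main obstacle is the rigorous justification of the first reduction: one must be sure that no interpolation mixing the $u$-observations on $D_H$ and the $f$-observations on $D_{H/m}$ in a more elaborate way can beat the split predictor $\rho\tilde f+\tilde g$, which is precisely the content of Theorem~\ref{th:vfgp_error} and relies on the independence of $f$ and $g$ together with the inclusion $D_H\subseteq D_{H/m}$. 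Once that is in place, distributing the supremum over the product class and substituting the single-fidelity formula are routine.
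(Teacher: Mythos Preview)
Your proposal is correct and matches the paper's intended approach: the paper does not give a separate proof of Theorem~\ref{th:multifidelityMinimax}, treating it instead as an immediate consequence of Theorem~\ref{th:vfgp_error} (the decoupling into $\rho^2\sigma^2_{H/m}(\tilde f,F)+\sigma^2_H(\tilde g,G)$) and Theorem~\ref{th:multivariateMinimax} (the single-fidelity minimax value), which is exactly your two-step decomposition. Your observation that the minimax-optimal kernel of Theorem~\ref{th:multivariateMinimax} does not depend on the particular $F$ in the smoothness class is precisely what makes the upper bound go through, and the decoupling of $(\tilde f,F)$ from $(\tilde g,G)$ handles the lower bound.
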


\section{Optimal Ratio of sizes of Variable Fidelity Data Samples}
\label{sec:optimal_design}

Obtained results allow us to get the optimal ratio $\hRatio$ of sizes of variable
fidelity data samples. 
We consider the following setting: 
one evaluation of $u(\vecX)$ costs $c$, whereas one evaluation of $f(\vecX)$ is $1$; 
the total evaluation cost is equal to the
number of points in a unit hypercube $\frac{1}{h^{\iD}}$ multiplied by the evaluation
price; 
and the computational budget is set to $\Lambda$.

For such setup the total budget is equal to $c \frac{1}{h^{\iD}} + \sRatio \frac{1}{h^{\iD}}$,
where $\sRatio = \hRatio^{\iD}$ is the ratio of sizes of variable fidelity data
samples.

Using Theorem~\ref{th:multifidelityMinimax} we prove
\begin{theorem} \label{th:optimal_ratio_vfgp}
The minimum of the minimax interpolation error~\eqref{eq:minimax_error} given the
computational budget $\Lambda$ has the form
\begin{align*}
    &\min_{\substack{h, \sRatio: \\ \Lambda h^{\iD} = c + \sRatio}} R^{h, \hRatio} (L_f, L_g)    = \rho^2 \frac{L_f}{2} \left(\frac{c + \sRatio^*}{\pi \Lambda \sRatio^*}\right)^{\frac{2}{d}}
           + \frac{L_g}{2} \left(\frac{c + \sRatio^*}{\pi \Lambda}\right)^{\frac{2}{\iD}}
        \,,
\end{align*}
and the optimal ratio is $\sRatio^* = \left( \frac{L_f}{L_g} c \rho^2 \right)^{\frac{\iD}{\iD + 2}}$.
\end{theorem}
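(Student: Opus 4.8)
The plan is to substitute the closed-form minimax error from Theorem~\ref{th:multifidelityMinimax} into the budget-constrained optimization and carry out an elementary two-stage minimization: first eliminate $h$ using the budget constraint, then optimize the remaining one-dimensional problem in $\sRatio$. Concretely, from $\Lambda h^{\iD} = c + \sRatio$ we get $h^2 = \left(\frac{c + \sRatio}{\Lambda}\right)^{2/\iD}$, and recalling $\sRatio = \hRatio^{\iD}$ so that $(h/\hRatio)^2 = h^2/\sRatio^{2/\iD} = \left(\frac{c+\sRatio}{\Lambda \sRatio}\right)^{2/\iD}$, the expression~\eqref{eq:minimax_error} becomes
\begin{equation*}
    \Phi(\sRatio) = \rho^2 \frac{L_f}{2} \left(\frac{c + \sRatio}{\pi \Lambda \sRatio}\right)^{\frac{2}{\iD}}
        + \frac{L_g}{2} \left(\frac{c + \sRatio}{\pi \Lambda}\right)^{\frac{2}{\iD}} \,.
\end{equation*}
So the claimed value of the minimum is just $\Phi(\sRatio^*)$ once $\sRatio^*$ is identified, and the substance of the proof is showing $\sRatio^* = \left(\frac{L_f}{L_g} c \rho^2\right)^{\iD/(\iD+2)}$ minimizes $\Phi$.

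First I would differentiate $\Phi$ with respect to $\sRatio$. Writing $a = \rho^2 \frac{L_f}{2 (\pi\Lambda)^{2/\iD}}$ and $b = \frac{L_g}{2(\pi\Lambda)^{2/\iD}}$, we have $\Phi(\sRatio) = a\left(\frac{c+\sRatio}{\sRatio}\right)^{2/\iD} + b (c+\sRatio)^{2/\iD} = a\left(1 + \frac{c}{\sRatio}\right)^{2/\iD} + b(c+\sRatio)^{2/\iD}$. Then
\begin{equation*}
    \Phi'(\sRatio) = a \cdot \frac{2}{\iD}\left(1 + \frac{c}{\sRatio}\right)^{\frac{2}{\iD}-1}\left(-\frac{c}{\sRatio^2}\right) + b \cdot \frac{2}{\iD}(c+\sRatio)^{\frac{2}{\iD}-1} \,.
\end{equation*}
Setting $\Phi'(\sRatio) = 0$, canceling $\frac{2}{\iD}$, and multiplying through by $\sRatio^2 (c+\sRatio)^{1-2/\iD}$, the factors $(c+\sRatio)^{2/\iD - 1}$ and $(1 + c/\sRatio)^{2/\iD - 1} = (c+\sRatio)^{2/\iD-1}\sRatio^{1-2/\iD}$ combine so that the stationarity condition collapses to $a \, c \, \sRatio^{-1-2/\iD} \cdot (\text{const powers cancel}) = b$; carrying the bookkeeping carefully gives $a c = b \, \sRatio^{1 + 2/\iD}$, i.e. $\sRatio^{(\iD+2)/\iD} = \frac{ac}{b} = \rho^2 \frac{L_f}{L_g} c$, hence $\sRatio^* = \left(\rho^2 \frac{L_f}{L_g} c\right)^{\iD/(\iD+2)}$. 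The main obstacle — really the only place to be careful — is getting these exponent manipulations exactly right, since the $(c+\sRatio)$ terms appear with shifted powers in both summands and it is easy to mismatch them; I would double-check by direct substitution $\sRatio = \sRatio^*$ back into $\Phi'$.

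Finally I would verify this stationary point is the global minimum on $\sRatio \in (0,\infty)$. Both $\left(1 + c/\sRatio\right)^{2/\iD}$ and $(c+\sRatio)^{2/\iD}$ are, respectively, strictly decreasing and strictly increasing in $\sRatio$, with the first $\to a\cdot(\text{finite, }>a)$ behavior dominating near $0$ where $\Phi \to +\infty$ (the $(1+c/\sRatio)^{2/\iD}$ term blows up) and the second forcing $\Phi \to +\infty$ as $\sRatio \to \infty$; together with $\Phi$ being smooth and $\Phi'$ having exactly one zero (which follows since $\Phi'(\sRatio) = 0 \iff \sRatio^{(\iD+2)/\iD} = ac/b$ has a unique positive solution), $\sRatio^*$ is the unique global minimizer. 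Plugging $\sRatio^*$ into $\Phi$ and rewriting $a, b$ back in terms of $L_f, L_g, \rho, \Lambda, \pi$ yields the displayed formula for the minimal minimax error, completing the proof. One should also note that $h^* = \left(\frac{c+\sRatio^*}{\Lambda}\right)^{1/\iD}$ and $\hRatio^* = (\sRatio^*)^{1/\iD}$ are the corresponding optimal grid parameters, though in principle one must remark that $\hRatio^*$ rounds to an integer in the exact discrete setting — an issue the paper treats as negligible, consistent with its stated infinite-grid idealization.
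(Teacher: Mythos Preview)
Your proposal is correct and follows essentially the same approach as the paper: substitute the budget constraint $\Lambda h^{\iD} = c + \sRatio$ into~\eqref{eq:minimax_error}, reduce to a one-variable problem in $\sRatio$, and solve the first-order condition to obtain $\sRatio^{(\iD+2)/\iD} = \rho^2 c\, L_f / L_g$. Your argument is in fact slightly more complete than the paper's, since you also verify that the stationary point is the unique global minimizer on $(0,\infty)$ and remark on the integer rounding of $\hRatio^*$, both of which the paper's proof omits.
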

The optimal ratio $\sRatio^*$ depends on the relative cost $c$ of the high fidelity
function evaluation, the coefficient $\rho$ and the smoothnesses $L_f$ and $L_g$ of
$f(\vecX)$ and $g(\vecX)$ respectively.

If for the interpolation we use evaluations of $u(\vecX)$ exclusively,
then we get the following minimax interpolation error given the budget $\Lambda$:
\begin{equation*}
    \min_{\substack{h:\, \Lambda h^{\iD} = c}} R^{h} (L_f, L_g)
        =  \rho^2 \frac{L_f}{2} \left(\frac{c}{\pi \Lambda}\right)^{\frac{2}{d}}
                + \frac{L_g}{2} \left(\frac{c}{\pi \Lambda}\right)^{\frac{2}{d}}
        \,.
\end{equation*}
Note, that we can get similar results for a specific covariance function using
Theorem~\ref{th:vfgp_error} and Corollaries~\ref{col:exponential_error} and~\ref{col:squared_exponential_error}.


\begin{figure}[t]
    \centering
    \includegraphics[width=0.35\textwidth]{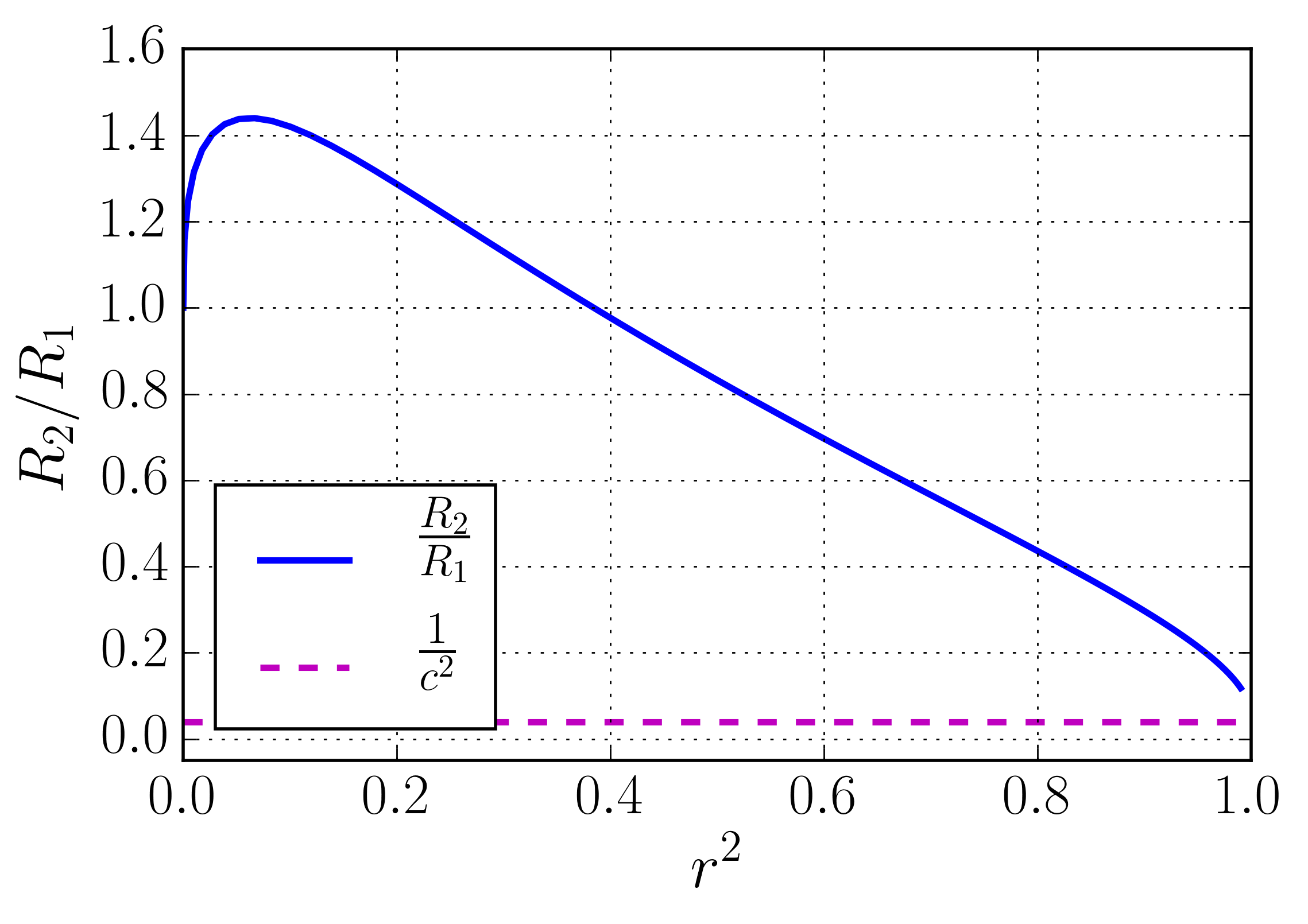}
    \caption{Dependence of the ratio $\frac{R_2}{R_1}$ of the minimax interpolation
    errors on the correlation coefficient $r$ for $L_f = 3, L_g = 1, c = 5$, $\iD = 1$.}
    \label{fig:plots-3-1-5}
\end{figure}

\subsection{Comparison of Minimax Interpolation Errors Under Different Scenarios}

Let us now investigate under what conditions and to what extent the usage of the
variable fidelity data can decrease the interpolation error compared to using single
fidelity data within the same computational budget. 
We denote by $R_2 = R^{h, \sRatio^*} (L_f, L_g, \rho)$
the minimax interpolation error, obtained when using the variable fidelity data,
and by $R_1 = R^{h} (L_f, L_g, \rho)$ the minimax interpolation error, obtained
when using only the high fidelity data. 
The ratio $\frac{R_2}{R_1}$ characterizes
benefits of the variable fidelity data over single fidelity data: $\frac{R_2}{R_1} \geq 1$
means there is no advantage to using the variable fidelity data, while $\frac{R_2}{R_1} < 1$
implies that the variable fidelity data improves the accuracy of the interpolation.

The ratio $\frac{R_2}{R_1}$ has the form:
\begin{equation*}
    \frac{R_2}{R_1}
        = \frac{\left(1
                    + \left(
                        \frac{L_f^{\iD} \rho^{2 \iD}}{L_g^{\iD} c^2} 
                    \right)^{\frac{1}{\iD + 2}} 
                \right)^{\frac{\iD + 2}{\iD}}}
               {1 + \rho^2 \frac{L_f}{L_g}} \,.
\end{equation*}

If we put $V_f = \mathbb{E} f^2(\vecX)$ and $V_g = \mathbb{E} g^2(\vecX)$, then the
correlation coefficient $r$ between $u(\vecX)$ and $f(\vecX)$ is 
$r = \frac{1}{\sqrt{1 + \frac{V_g}{V_f} \frac{1}{\rho^2} }}$. 
Thus for $r \rightarrow 0$
or $r \rightarrow 1$ it holds that
\begin{align*}
    r \rightarrow 0:\,\,
        &\frac{R_2}{R_1} \approx 1 + \frac{\iD + 2}{\iD} \left(\frac{L_f V_f}{L_g V_g} \right)^{\frac{\iD}{\iD + 2}}
                                     \frac{r^{\frac{2 \iD}{\iD + 2}}}{c^{\frac{2}{\iD + 2}}} \,, \\
    r \rightarrow 1:\,\,
        &\frac{R_2}{R_1} \approx \frac{1}{c^{\frac{2}{\iD}}}
                                 + \frac{2 + \iD}{\iD} \left(\frac{L_g V_f}{L_f V_g} \right)^{\frac{\iD}{\iD + 2}}
                                   \frac{(1 - r^2)^{\frac{\iD}{\iD + 2}}}{c^{\frac{4}{\iD(\iD + 2)}}} \,.
\end{align*}
If $r \rightarrow 0$ then the variable fidelity data is unable to improve the accuracy
of the interpolation, while when $r \rightarrow 1$ the ratio $\frac{R_2}{R_1}$ approaches
$\frac{1}{c^{\frac{2}{\iD}}}$, where usually $c \gg 1$. 
The speed of convergence improves
as $c$ increases and $\frac{L_g}{L_f}$ decreases, which means that if $g(\vecX)$ is
smoother than $f(\vecX)$, then the variable fidelity data improves accuracy of interpolation additionally. 

In figure~\ref{fig:plots-3-1-5} we show how the ratio $\frac{R_2}{R_1}$ depends on
$r = r(\rho)$ in case of $\iD = 1$. For small $r$ it holds that $R_2 > R_1$ no matter
how large our computational budget is, while for high enough $r$ the value of $\frac{R_2}{R_1}$
tends to $\frac{1}{c^2}$, $c \gg 1$.


Figure~\ref{fig:vf_usage_regions} depicts the smallest values of $r$ in the case of
$\iD = 1$, which for the fixed $c > 1$ provides $R_2 \leq R_1$.

\begin{figure}[t]
    \centering
    \includegraphics[width=0.35\textwidth]{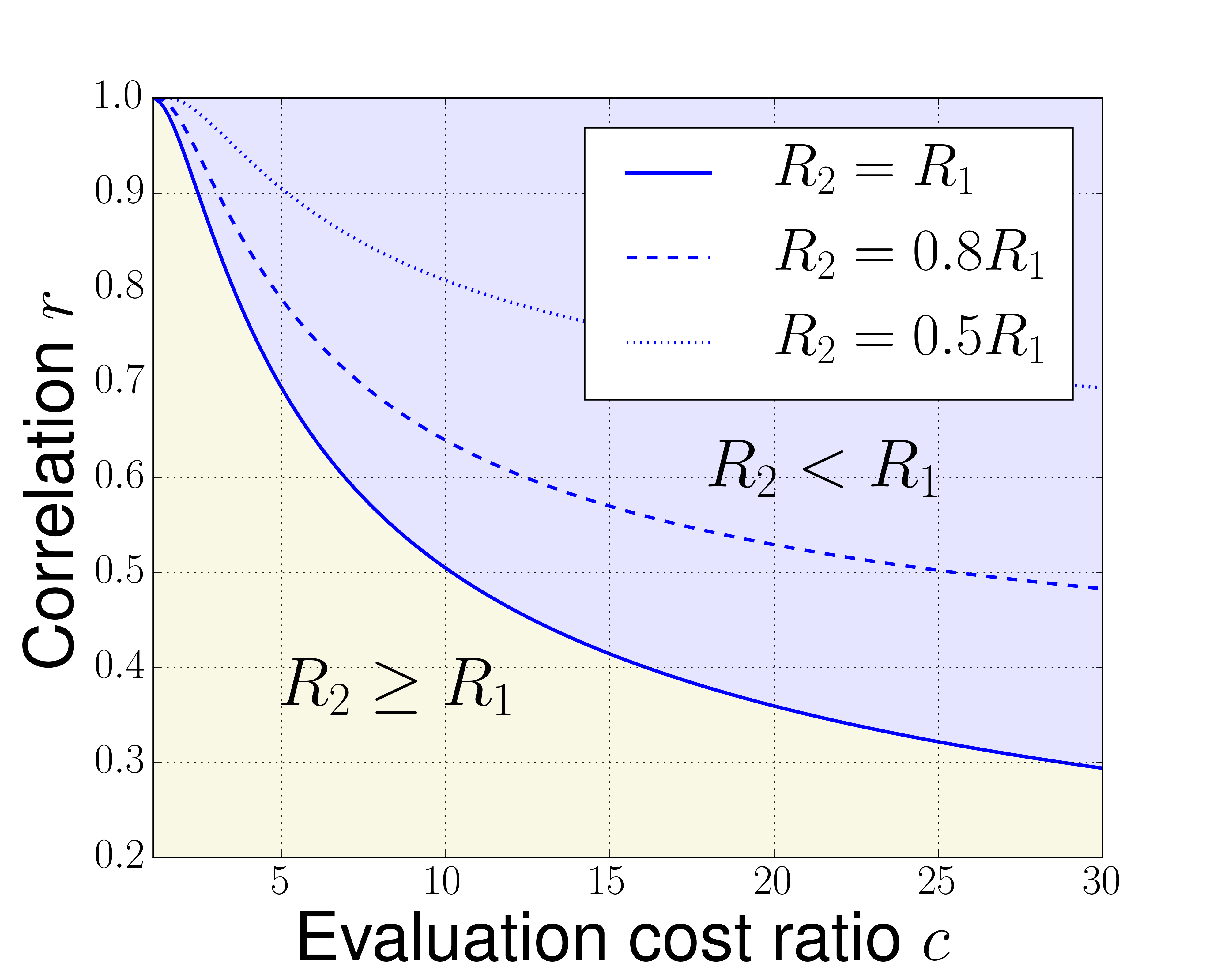}
    \caption{Curves $R_2 = k R_1$ for $L_f = 2$, $L_g = 1$, $\iD = 1$.}
    \label{fig:vf_usage_regions}
\end{figure}

For $\iD > 1$ and $c\gg1$ the minimal value of $r$ that provides $R_2 \leq R_1$ is of
the order $\frac{1}{\sqrt{c}}$:
\begin{equation*}
    r \approx \frac{1}{\sqrt{c}}
              \left(\frac{V_f}{V_g} \right)^{\frac12}
              \left(\frac{L_g}{L_f} \right)^{\frac12} \,.
\end{equation*}

\subsection{Optimal Ratio of Sample Sizes for Variable Fidelity Data}

If we know the true covariance function it is easy to estimate the parameters $L_f$
and $L_g$ with the second derivatives of the covariance function $\frac{\partial^2 R(\vecX)}{\partial x_i \partial x_j}$
at the point $\vecX = \mathbf{0}$. 
However, in the small sample case it is difficult
to estimate the parameters of the covariance function~\cite{zaytsev2014properties}
or the sum of partial derivatives~\cite{kucherenko2009derivative} accurately.

Therefore, assuming $L_f = L_g$ and using Theorem~\ref{th:multifidelityMinimax}, we
propose {\bf Technique~\ref{alg:get_vf_design}}, that can be used to estimate the
optimal ratio of sample sizes $\sRatio^*$ and produce a design of experiments for
the case of variable fidelity data. 
The advantage of the proposed technique is that
it can be used even for a variable fidelity modeling approach different from the
Gaussian process regression framework; 
and it requires little prior knowledge about
the dependence structure between the high and the low fidelity functions, in particular,
we only have to estimate the correlation coefficient $r$.

\begin{algorithm}
  \begin{algorithmic}[1]
    \Require{Correlation $r$ between the variable fidelity observations,
    budget $\Lambda$, cost $c$ of one high fidelity function evaluation (the cost of evaluating
    the low fidelity function is fixed at $1$)}
    \Statex
    \Let{$\rho^2$}{$1 / (1 - \frac{1}{r^2})$}
    \Let{$\sRatio^*$}{$(c \rho^2)^{\frac{\iD}{\iD + 2}}$}
    \LetFour{$\sS_f$}{$\frac{\Lambda \sRatio^*}{c + \sRatio^*}$}{$\sS_u$}{$\frac{\Lambda}{c + \sRatio^*}$}
    \State Generate random nested designs of experiments $D_f, D_u$, $D_u \subseteq D_f$,
    with $|D_f| = n_f$, $|D_u| = n_u$.
    \State \Return{$D_f, D_u$}
  \end{algorithmic}
  \caption{Generation of designs of experiments $D_f$ and $D_u$ for evaluations of
  the low fidelity function and the high fidelity function respectively.}
  \label{alg:get_vf_design}
\end{algorithm}


\section{Experiments}
\label{expexpexp}

\begin{figure*}[ht!]
    \centering
    \begin{subfigure}[b]{0.3\textwidth}
        \includegraphics[width=\textwidth]{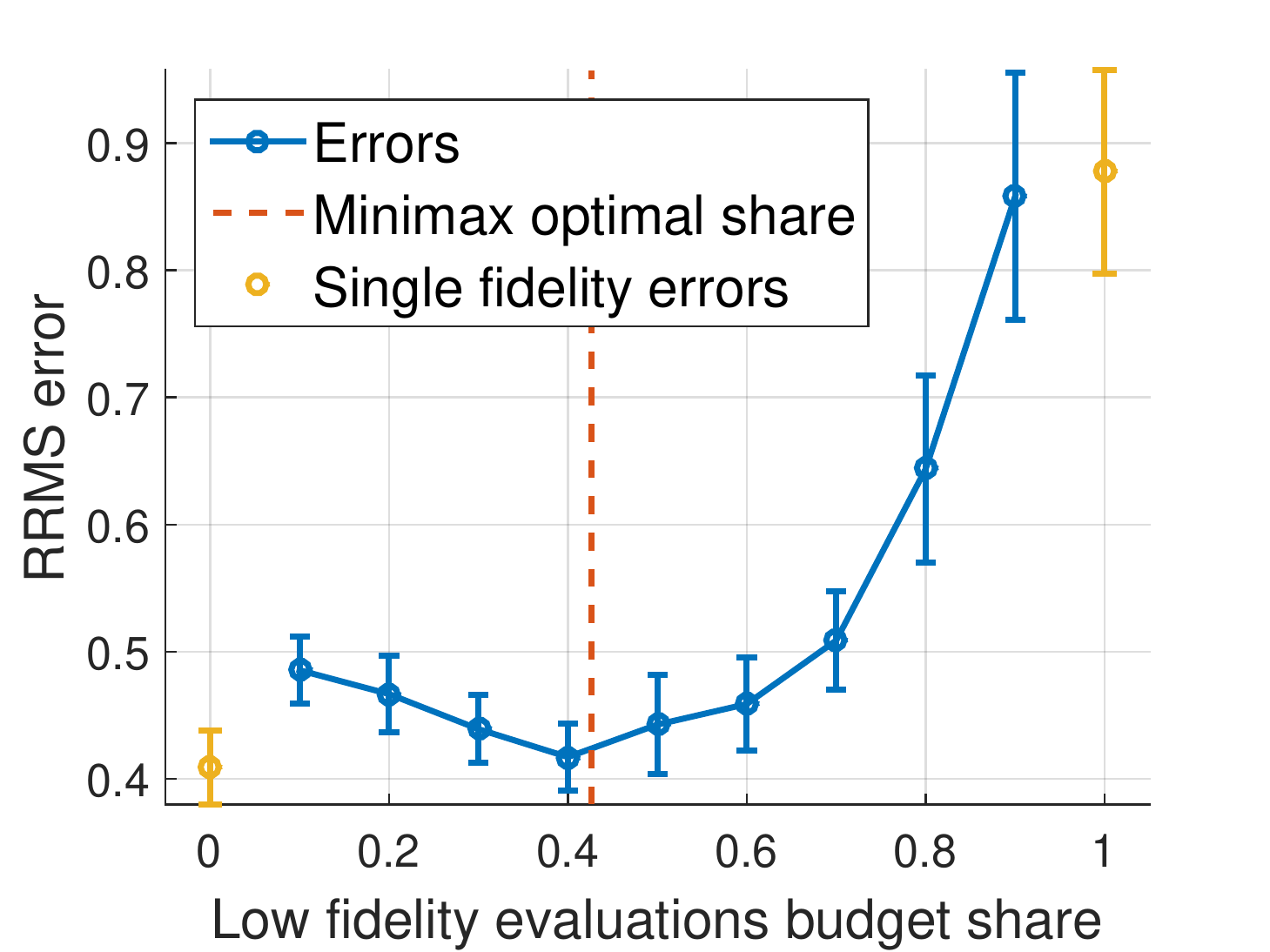}
        \caption{$c = 5, r = 0.8$}
    \end{subfigure}
    ~
    \begin{subfigure}[b]{0.3\textwidth}
        \includegraphics[width=\textwidth]{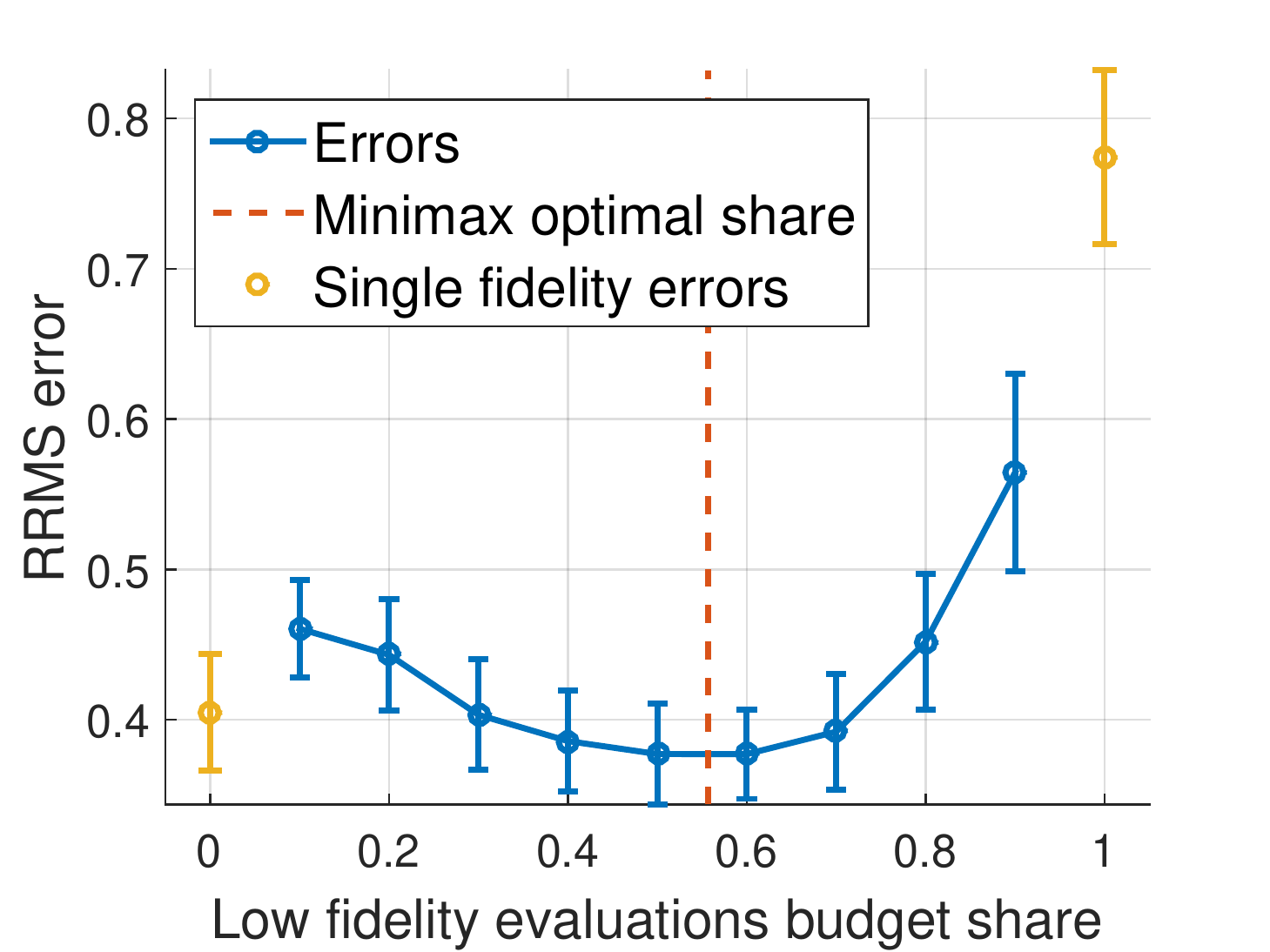}
        \caption{$c = 5, r = 0.9$}
    \end{subfigure}
    ~
    \begin{subfigure}[b]{0.3\textwidth}
        \includegraphics[width=\textwidth]{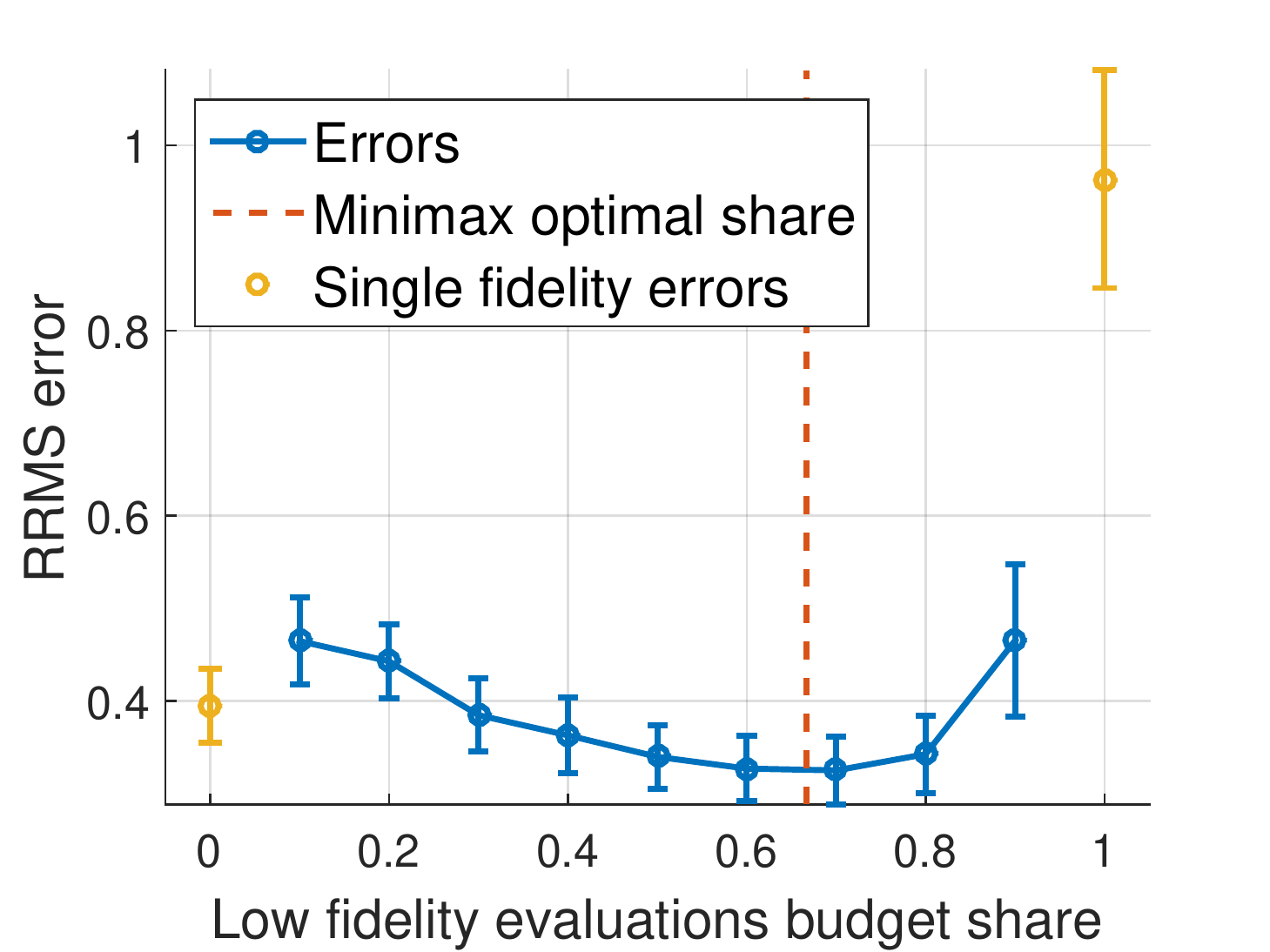}
        \caption{$c = 5, r = 0.95$}
    \end{subfigure}

    \begin{subfigure}[b]{0.3\textwidth}
        \includegraphics[width=\textwidth]{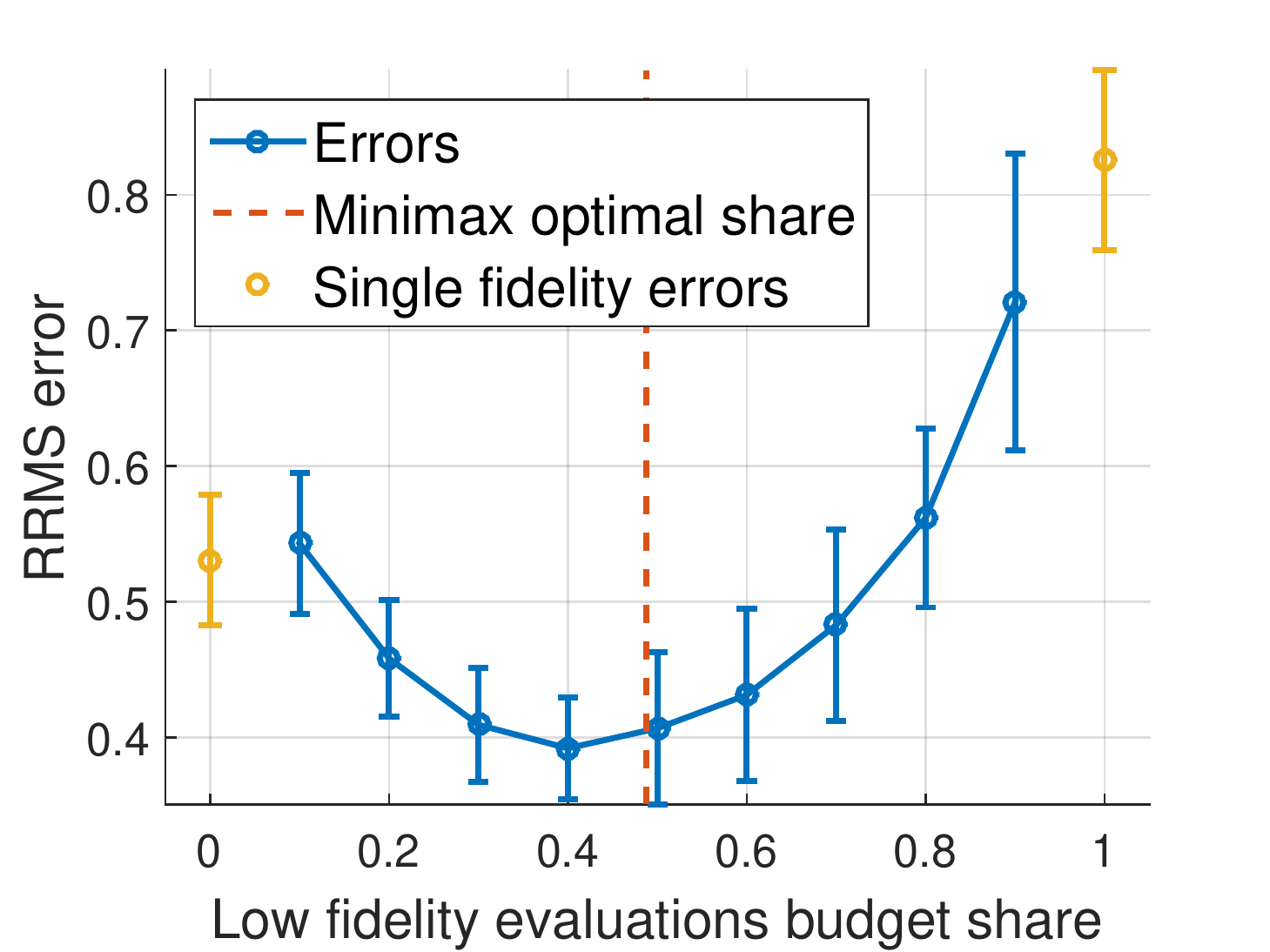}
        \caption{$c = 10, r = 0.8$}
    \end{subfigure}
    ~
    \begin{subfigure}[b]{0.3\textwidth}
        \includegraphics[width=\textwidth]{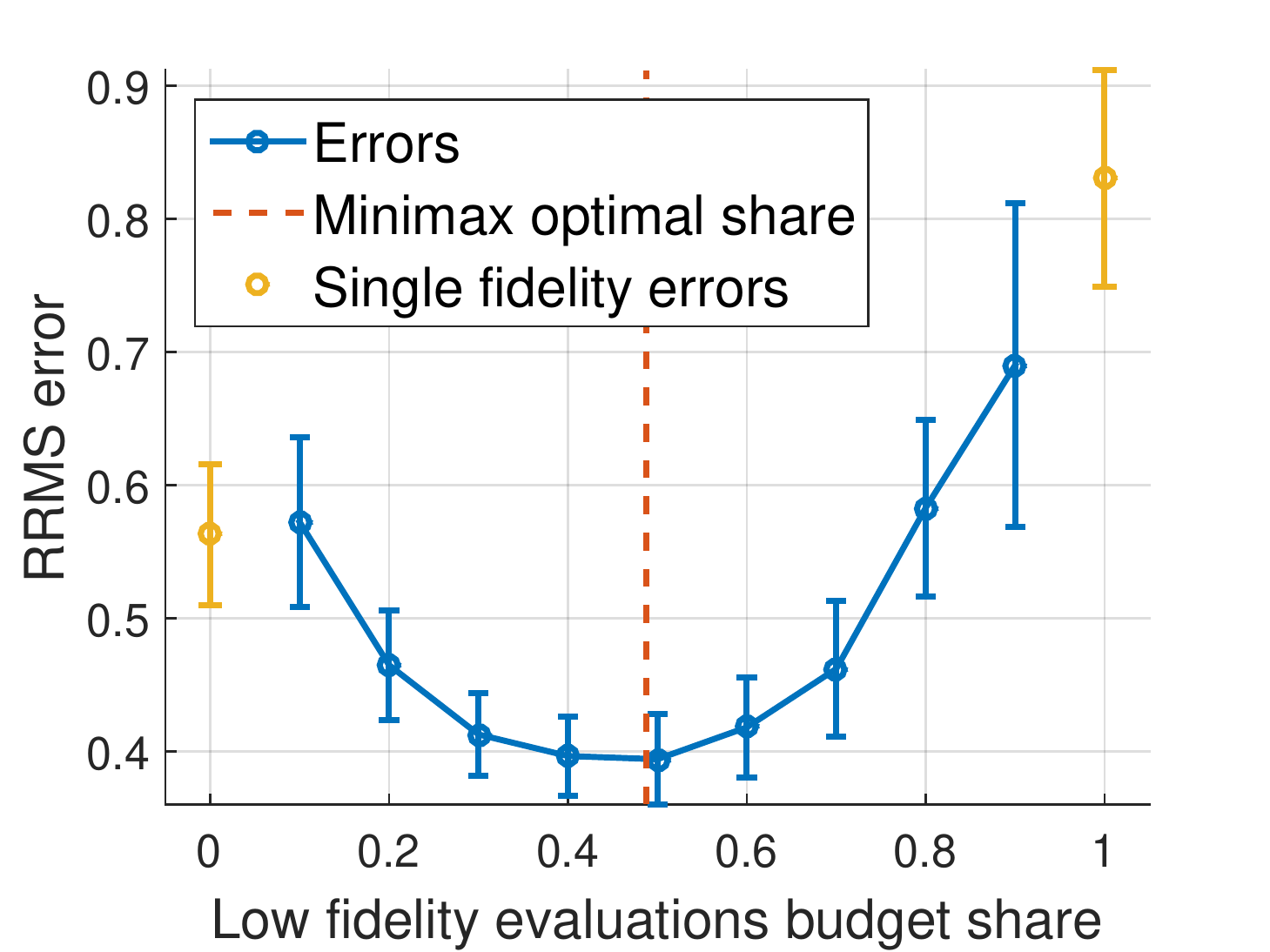}
        \caption{$c = 10, r = 0.9$}
    \end{subfigure}
    ~
    \begin{subfigure}[b]{0.3\textwidth}
        \includegraphics[width=\textwidth]{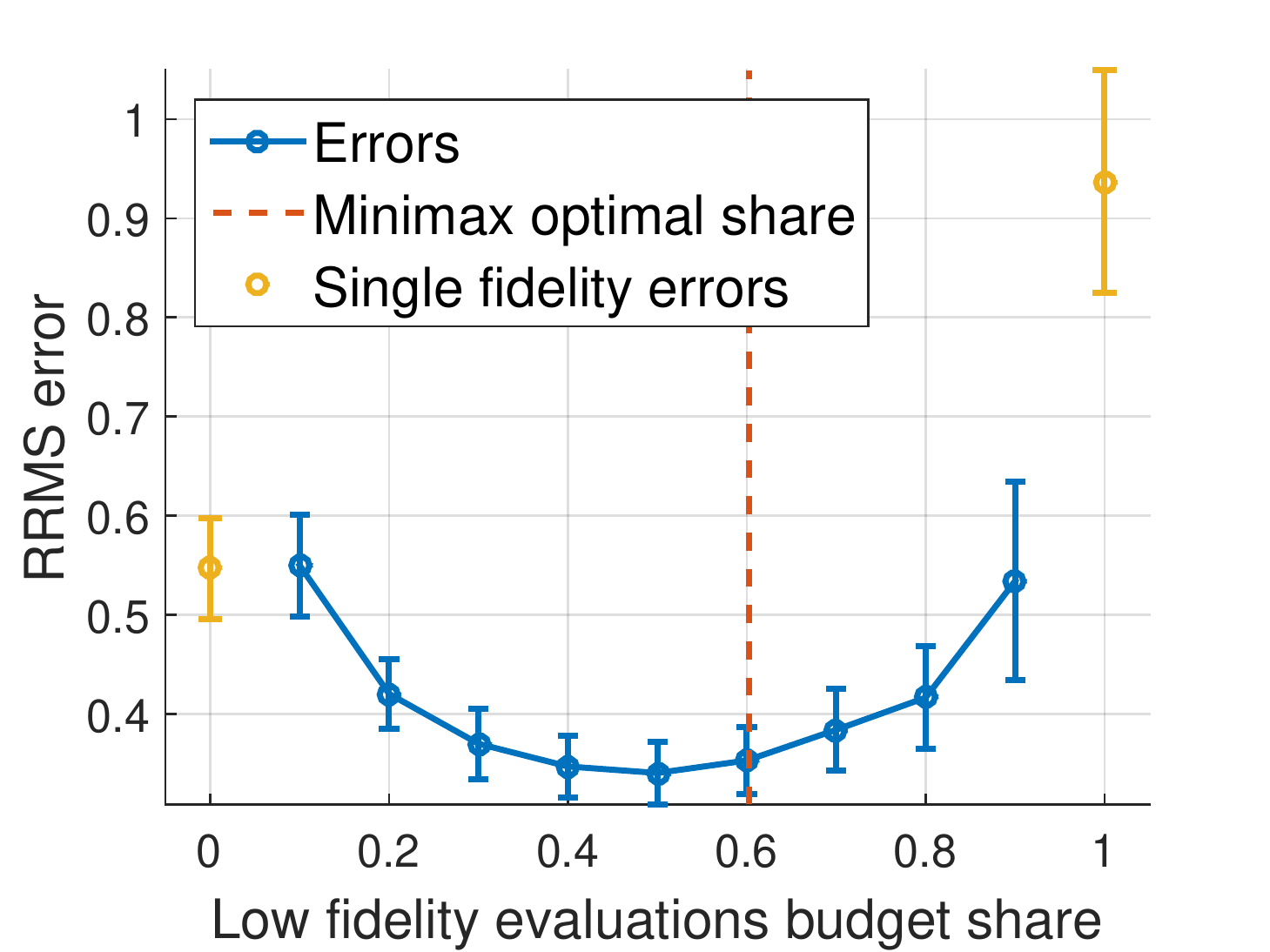}
        \caption{$c = 10, r = 0.95$}
    \end{subfigure}
    \caption{Synthetic data. Dependence of the RRMS error on the share of the budget,
    allocated for the low fidelity function evaluations. We consider the case of
    $\iD = 3$ and different correlations $r$ between the high and low fidelity functions.
    \textbf{\textcolor{gold}{Yellow}} points correspond to the case when we use either
    exclusively high or exclusively low fidelity data. The results are averaged across
    $20$ runs.}
    \label{fig:error-vs-sample-size}
\end{figure*}

We evaluate the performance of the proposed algorithm for estimation of the optimal
ratio of sample sizes for variable fidelity data in two steps: we start with synthetic
data generated as realizations of Gaussian processes, and then consider data that
originates from engineering applications.

We use the Mat\'ern covariance function $R_{\boldsymbol{\theta}}(\vecX - \vecX')$ with
$\nu=\frac32$ that provides differentiable realizations of Gaussian processes~\cite{rasmussen2006}:
\begin{equation*}
    R_{\boldsymbol{\theta}}(\vecX - \vecX')
        = (1 + \sqrt{3} \mathrm{d}_{\boldsymbol{\theta}}(\vecX - \vecX'))
            \exp (-\sqrt{3} \mathrm{d}_{\boldsymbol{\theta}}(\vecX - \vecX'))
        \,,
\end{equation*}
where $\mathrm{d}_{\boldsymbol{\theta}}(\vecX - \vecX') = \sqrt{\sum_{i = 1}^{\iD} \theta_i (x_i - x_i')^2}$. 
To construct a Gaussian process regression model we use Bayesian estimates of the
covariance function parameters~\cite{burnaev2016regression} obtained in a way similar
to~\cite{burnaev2015surrogate}, as open source software alternatives require manual
tuning for each particular problem to provide a reasonable comparison~\cite{le2014recursive}.

To assess model accuracy we use the Relative Root Mean Squared Error (RRMS) estimated
using a dedicated test sample in case of a synthetic data and the cross-validation
procedure in case of a real data. For a model $\tilde{u}(\vecX)$ and a test sample
$S_{*} = \{\vecX^{*}_i, u_{i}^{*} = u(\vecX^{*}_i)\}_{i = 1}^{\sS_t}$ the RRMS error
is given by
$
    \mathrm{RRMS} = \sqrt{\frac{\sum_{i = 1}^{\sS_t} (u^{*}_i - \tilde{u}(\vecX^{*}_i))^2}
                               {\sum_{i = 1}^{\sS_t} (u^{*}_i - \overline{u})^2}}
                    ,
$
where $\overline{u} = \frac{1}{\sS_t} \sum_{i = 1}^{\sS_t} u^{*}_i$.

Data and scripts, used to run the experiments, are at {\small \texttt{gitlab.com/JohnDoe1989/VariableFidelityData}}.

\subsection{Synthetic Data Experiments}

In this section we generate synthetic data as a realization of a Gaussian process
with a specified covariance function. We follow the model $u(\vecX) = \rho f(\vecX) + g(\vecX)$, 
with nested designs, i.e. $D_u \subseteq D_f$, and design points picked uniformly
at random from $[0, 1]^{\iD}$. 
The total computational budget is $300$, and the cost
of evaluating $u(\vecX)$ is either $c=5$ or $c=10$. 
Since the exact values of $\rho$ and $r$ are known, 
we use them in our experiments to calculate $\sRatio^*$.

Figures~\ref{fig:error-vs-sample-size} depict the dependence of the RRMS error on
the proportion of the computational budget allocated for the low fidelity function
evaluations. 
It can be seen that our estimate of the optimal ratio $\sRatio^*$ is
close to the true optimal ratio despite the fact that estimates of the unknown parameters
of the Gaussian Process regression model were used, 
and the design of experiments was not a grid.

\subsection{Baseline Techniques}

We compare our technique for estimation of the optimal ratio of sample sizes, which
we call {\bf MinMinimax}, to four different baseline heuristics:
\begin{itemize}
    \item {\bf High} --- only the high fidelity data is used,
    \item {\bf Low} --- we use only the low fidelity data,
    \item {\bf EqualSize} --- the sizes of low and high fidelity data samples are equal,
    \item {\bf EqualBudget} --- the budget is devoted equally to low
    and high fidelity function evaluations.
\end{itemize}

Relative sizes of samples for these techniques are given in Table~\ref{table:techniques_ratio}.

\begin{table}
    \centering
    \begin{tabular}{cll}
        \hline
        Baseline Technique & $n_{\mathrm{u}}$ & $n_{\mathrm{f}}$ \\
        \hline
        High & $\Lambda / c$ & 0 \\
        EqualSize & $\Lambda / (c + 1)$ & $\Lambda / (c + 1)$ \\
        EqualBudget & $\Lambda / (2 c)$ & $\Lambda / 2$ \\
        Low & 0 & $\Lambda$ \\
        MinMinimax & $\Lambda / (c + \sRatio^*)$ & $\sRatio^* \Lambda / (c + \sRatio^*)$ \\
        \hline
    \end{tabular}
    \caption{Sizes of the high fidelity sample ($n_u$) and the low fidelity sample
     ($n_f$) in case of the computational budget $\Lambda$.}
    \label{table:techniques_ratio}
\end{table}

\begin{table*}
    \centering
    \begin{tabular}{cccccccc}
        \hline
        Problem   & Output & Input & High & EqualSize & EqualBudget & MinMinimax & Low \\
                  & number & dimension \\  
        \hline
        Euler     & 1 & 11 & 0.7674 & 0.8925 & 0.8462 & {\bf 0.7420} & 0.9139 \\ 
        Euler     & 2 & 11  & {\bf 0.0668} & 0.0778 & 0.2699 & 0.3803 & 0.3974 \\   
        Airfoil   & 1 & 6  & 0.5462 & 0.5946 & 0.5390 & 0.5221 & {\bf 0.4852} \\ 
        Airfoil   & 2 & 6  & {\bf 0.1200} & 0.1422 & 0.1301 & 0.1381 & 0.2962 \\ 
        MachAngle & 1 & 2  & {\bf 0.0886} & 0.1064 & 0.1951 & 0.1951 & 0.4052 \\ 
        MachAngle & 2 & 2  & {\bf 0.0938} & 0.1148 & 0.1714 & 0.1796 & 0.3651 \\ 
        Press12   & 1 & 6  & 0.5599 & 0.6019 & 0.3580 & {\bf 0.2779} & 0.2843 \\ 
        Press12   & 2 & 6  & 0.4433 & 0.4918 & 0.2715 & {\bf 0.1768} & {\bf 0.1768} \\ 
        Press13   & 1 & 6  & 0.5596 & 0.5759 & 0.3861 & {\bf 0.3481} & 0.5435 \\ 
        Press13   & 2 & 6  & 0.4492 & 0.4852 & 0.2782 & {\bf 0.1798} & {\bf 0.1798} \\ 
        Disk      & 1 & 6  & 0.2999 & 0.3400 & 0.1922 & 0.1934 & {\bf 0.1638} \\
        Disk      & 2 & 6  & 0.4460 & 0.4570 & 0.2998 & 0.2998 & {\bf 0.2723} \\ 
        SVM       & 1 & 2  & {\bf 0.1487} & 0.1492 & 0.1849 & 0.1642 & 0.6081\\ 
        Supernova & 1 & 3  & 0.0395 & 0.0484 & {\bf 0.0180} & 0.0575 & 0.0575 \\ 
        \hline
    \end{tabular}
    \caption{RRMS errors averaged over $20$ runs of the cross-validation procedure
    for the problems with real data.}
    \label{table:techniques_rrms_cpress}
\end{table*}


\subsection{Real Data Experiments}

We consider the following real data problems.
The first three of them ({\bf Euler}, {\bf Airfoil}~\cite{bernstein2011comparison}, {\bf MachAngle}) are concerned with calculation of lift and drag coefficients of an airfoil depending on flight conditions and airfoil geometry.
To evaluate these outputs we use different solvers for the high and the low fidelity data sources. 
The next two problems ({\bf Press}~\cite{burnaev2015surrogate}, {\bf Disk}~\cite{zaytsev2016variable}) investigate dependence of maximum stress and maximum displacement on geometry of the equipment considered. 
Although three data fidelities are available in the {\bf Press} problem, in each experiment we use only two.
The last two problems (\cite{kandasamy2016gaussian}, {\bf SVM}, {\bf Supernova}) are related to modeling dependence of the goodness-of-fit on model parameters.
Details on the problems are provided in the supplementary materials.
Input dimensions for these problems are listed in Table~\ref{table:techniques_rrms_cpress}.

The budget $\Lambda$ is equal to $300$ for all problems except {\bf Euler}, 
as in this problem the sample size is small. 
For comparison consistency the cost ratio $c$ is kept at $5$ for all given problems. 
If the {\bf MinMinimax} technique returns the sample size $n_u < 1$, then only the low fidelity data is used.
For the {\bf MinMinimax} technique we use the correlation coefficient $r$ estimated using the whole available data sample.
In addition, to keep the comparison meaningful we normalize all the data before
constructing regression models to get variables with zero mean and unit variance.

{\bf Euler.}\!
Eleven input variables parametrize geometry of an airfoil.
We use two different solvers to obtain high and low fidelity values. 
We use an Euler equations solver to generate
the high fidelity data and a full potential equations solver to generate the low
fidelity data.

{\bf Airfoil.}\!
The geometry of an airfoil
and the flight regime (the speed and the angle of attack) are described by $52$ input
variables. We employ a dimension reduction procedure similar to the PCA, and model the
dependence on six input factors~\cite{bernstein2011comparison}. 
We use two different solvers to obtain high and low fidelity values.

{\bf MachAngle.}\!
Two input variables are the Mach number and the angle of attack for a specific airfoil. 
We use two different solvers to obtain high and low fidelity values. 
Low fidelity solver provides almost linear dependence.

{\bf Press.}
We model the maximum stress and the maximum displacement for a C-shaped press~\cite{burnaev2015surrogate}. 
Six input variables describe the geometry of the press,
and the fidelity of output depends on a mesh size. 
We generate three different data samples that correspond to high, 
moderate and low fidelity outputs. 
We refer to the case when we model the high fidelity output by $u(\vecX)$ and the moderate fidelity output by $f(\vecX)$ as \texttt{Press12}, 
and the case when we model the high fidelity
output by $u(\vecX)$ and the low fidelity output by $f(\vecX)$ as \texttt{Press13}.

{\bf Disk.}\!
We model the maximum stress and the maximum displacement of a rotating disk in an
engine~\cite{zaytsev2016variable}. 
Six input variables describe the geometry of the disk. 
We use two different solvers to obtain high and low fidelity values.

{\bf SVM.}\!
We model the dependence of the SVM classifier accuracy from the \textbf{sklearn}~\cite{scikit-learn} on the kernel bandwidth and the margin coefficient for the ``MAGIC Gamma
Telescope'' dataset~\cite{kandasamy2016gaussian}. 
We have two input variables.
As a measure of accuracy we use the area under the ROC curve as suggested by the
authors of the dataset~\cite{bock2004methods}. 
To generate the low fidelity dataset we estimate the accuracy
of the classifier constructed using $500$ training points, 
and to generate the high fidelity dataset 
we estimate the accuracy of the classifier constructed using $2000$ training points.

{\bf Supernova.}\!
We model the dependecy of the likelihood of the supernova redshift data on the three
fundamental physical constants, similarly to~\cite{kandasamy2016gaussian}.
To get a variable fidelity data we vary the grid size for a one-dimensional integration:
we generate the low fidelity data using the grid of size $3$ and the high fidelity
data using the grid of size $1000$. 
We note that if the size of the grid is greater than $3$, 
then the high and low fidelity functions become indistinguishable.

We provide errors in Table~\ref{table:techniques_rrms_cpress}, which show that the
best results are typically obtained using the proposed {\bf MinMinimax} approach.
However, there are two drawbacks: 
sometimes it is impossible to improve the model
accuracy using variable fidelity data; 
or too small sample size is selected 
making it impossible to construct a reliable regression model. 
For example, for the
{\bf Supernova} dataset the {\bf MinMinimax} method works poorly because it suggests
to use the high fidelity sample size equal to four, which is obviously insufficient
for the cokriging to work efficiently. 
That is why we suggest to impose a lower bound for the
size of the high fidelity data sample.


\section{Conclusions}

We prove the minimax interpolation error for the Gaussian process regression in
the multivariate case. The obtained results are used to estimate the interpolation
error for the regression modeling with the variable fidelity data. This allows
us to identify settings in which the accuracy of the regression model can be improved
with the variable fidelity data.

Moreover, we estimate the optimal ratio of sizes of the variable fidelity data samples.
Using both synthetic and real problems, we demonstrate that this ratio can be used
when producing a design of experiments.

However, there is still room for improvement of the proposed approach: it requires
an accurate estimate of the correlation coefficient, and it doesn't take into account
inaccuracies of estimates of the regression model parameters. 
Furthermore, in this paper we consider the case of two fidelity levels only, 
whereas in practice multiple fidelity levels can be accessible.

\section*{Acknowledgments}
The research, presented in Section \ref{expexpexp} of this paper, was supported by the RFBR grants 16-01-00576 A and 16-29-09649 ofi\_m; the research, presented in other sections, was supported solely by the Russian Science Foundation grant (project 14-50-00150).

The authors would like to thank Yu. Golubev for his helpful comments, I. Nazarov and M. Panov for their help with proofreading.

\appendix

\section{PROOFS FOR SUBSECTION~\ref{sec:single_fidelity}}

\begin{proof}[Proof of Theorem~\ref{th:interpolation_error}]
It is easy to see that
\begin{align*}
\bbE [f(\vecX) - \tilde{f}(\vecX) ]^2 &= \int_{\bbR^{\iD}} F(\vecO) \left|1 - |H| \sum_{\vecK \in \bbZ^{\iD}} K(\vecX - \vecX_{\vecK}) \exp(-2\pi \mathrm{i} \vecO^{T} (\vecX_{\vecK} - \vecX)) \right|^2 d\vecO = \\
&= \int_{\bbR^{\iD}} F(\vecO) \left|1 - |H| \sum_{\vecK \in \bbZ^{\iD}} \left( \int_{\bbR^{\iD}} \hat{K}(\vecU) \exp(-2\pi \mathrm{i} \vecU^T (\vecX - \vecX_{\vecK})) d\vecU \right) \exp(-2\pi \mathrm{i} \vecO^{T} (\vecX_{\vecK} - \vecX)) \right|^2 d\vecO,
\end{align*}
where $\hat{K}(\vecU)$ is the Fourier transform of $K(\vecX)$.
As Poisson summation formula suggests:
\[
\sum_{\vecK \in \bbZ^{\iD}} \exp(2 \pi \mathrm{i} \vecK^T \vecO) = \sum_{\vecK \in \bbZ^{\iD}} \delta(\vecO + \vecK),
\]
where $\delta(\vecO)$ is the Dirac delta function,
then
\begin{align*}
&\bbE [f(\vecX) - \tilde{f}(\vecX) ]^2 = 
\int_{\bbR^{\iD}} F(\vecO) \left|1 - |H| \sum_{\vecK \in \bbZ^{\iD}} \int_{\bbR^{\iD}} \hat{K}(\vecU) \exp(2\pi \mathrm{i} (\vecO - \vecU)^T \vecX) \delta(\vecU - \vecO + H^{-1} \vecK)d\vecU \right|^2 d\vecO = \\
&= \int_{\bbR^{\iD}} \left| 1 - \sum_{\vecK \in \bbR^{\iD}} \hat{K} 
(\vecO - H^{-1} \vecK) \exp(2 \pi \mathrm{i} H^{-1} \vecX^T \vecK) \right|^2 d\vecO.
\end{align*}

Taking into account orthogonality of the system of functions $\exp(2 \pi \mathrm{i} H^{-1} \vecX^T \vecK)$ on $\vecX \in [0, h_1] \times \ldots \times [0, h_{\iD}]$ we integrate the equality to get the interpolation error
\[
\sigma^2_H(\tilde{f}, F) = 
\int_{\bbR^{\iD}} F(\vecO) \left|[1 - \hat{K}(\vecO)]^2 + \sum_{\vecK \in \bbZ^{\iD} \setminus \{\mathbf{0}\}} \hat{K}^2 (\vecO + H^{-1} \vecK) \right|^2 d\vecO.
\]

To get $\hat{K}(\vecO)$ that minimizes the interpolation error we rewrite it as
\[
\sigma^2_H(\tilde{f}, F) = 
\int_{\bbR^{\iD}} \left|[1 - \hat{K}(\vecO)]^2 F(\vecO) + \hat{K}(\vecO)^2 \sum_{\vecK \in \bbZ^{\iD} \setminus \{\mathbf{0}\}} \hat{F} (\vecO + H^{-1} \vecK) \right|^2 d\vecO.
\]
To minimize this error we solve this quadratic optimization problem for each $\vecO$ and get:
\[
\hat{K}(\vecO) = \frac{\hat{F} (\vecO)}
                      {\sum_{\vecK \in \bbZ^{\iD}} \hat{F} (\vecO + H^{-1} \vecK)}.
\]
Then 
\begin{equation}
\label{eq:sigma_formula}
\sigma^2_H(\tilde{f}, F) = \int_{\bbR^{\iD}} F(\vecO) \frac{\sum_{\vecK \in \bbZ^{\iD} \setminus \{\mathbf{0}\}} \hat{F} (\vecO + H^{-1} \vecK)}
                      {\sum_{\vecK \in \bbZ^{\iD}} \hat{F} (\vecO + H^{-1} \vecK)} d\vecO.
\end{equation}
\end{proof}

\begin{proof}[Proof of Remark \ref{lemma:best_approximation}]
It holds that the best approximation has the form
\[
\tilde{f}(\vecX) = \mu(\Omega_H) \sum_{\vecK \in \bbZ^{\iD}} \phi(\vecX, \vecX_{\vecK}) f(\vecX_{\vecK})
\] 
for some $\phi(\vecX, \vecX')$. As Wiener-Hopf equations for the covariance function $R(\vecX)$ hold, then
\begin{equation}
\label{eq:wiener}
\sum_{\vecK \in \bbZ^{\iD}} \phi(\vecX, \vecX_{\vecK}) R(\vecX_{\vecK} - \vecX_{\vecM}) = R(\vecX - \vecX_{\vecM})
\end{equation}
for all $\vecM \in \bbZ^{\iD}$.
Let us prove that $\phi(\vecX, \vecX_{\vecK}) = \phi(\vecX - \vecX_{\vecK})$.

Let us consider two sums from~\eqref{eq:wiener}:
\begin{align*}
\sum_{\vecK \in \bbZ^{\iD}} \phi(\vecX, \vecX_{\vecK}) R(\vecX_{\vecK} - \vecX_{\vecM}) &= R(\vecX - \vecX_{\vecM}), \\
\sum_{\vecK \in \bbZ^{\iD}} \phi(\vecX - \vecX_{\vecS}, \vecX_{\vecK}) R(\vecX_{\vecK} - \vecX_{\vecM - \vecS}) &= R(\vecX - \vecX_{\vecS} - \vecX_{\vecM - \vecS}).
\end{align*}
As $\vecX_{\vecM - \vecS} = H \vecM - H \vecS = \vecX_{\vecM} - \vecX_{\vecS}$, then
\[
\sum_{\vecK \in \bbZ^{\iD}} \phi(\vecX, \vecX_{\vecK}) R(\vecX_{\vecK} - \vecX_{\vecM}) =
R(\vecX - \vecX_{\vecM}) = R(\vecX - \vecX_{\vecS} - \vecX_{\vecM - \vecS}) = 
\sum_{\vecK \in \bbZ^{\iD}} \phi(\vecX - \vecX_{\vecS}, \vecX_{\vecK}) R(\vecX_{\vecK} - \vecX_{\vecM - \vecS}).
\]

Consequently, 
\[
\sum_{\vecK \in \bbZ^{\iD}} \left[\phi(\vecX, \vecX_{\vecK}) - \phi(\vecX - \vecX_{\vecS}, \vecX_{\vecK} - \vecX_{\vecS}) \right] R(\vecX_{\vecK} - \vecX_{\vecM}) = 0.
\]

Positive definiteness of the covariance function $R(\vecX)$ implies that
\[
\phi(\vecX, \vecX_{\vecK}) = \phi(\vecX - \vecX_{\vecS}, \vecX_{\vecK} - \vecX_{\vecS}).
\] 
For $\vecX_{\vecS} = \vecX_{\vecK}$ we get 
\[
\phi(\vecX, \vecX_{\vecK}) = \phi(\vecX - \vecX_{\vecK}, \mathbf{0}) = K(\vecX - \vecX_{\vecK}). 
\]

Due to Poisson summation formula it holds that
\[
\frac{1}{\mu(\Omega_H)} \Phi(\vecO) \sum_{\vecK \in \bbZ^{\iD}} F(\vecO - \vecO_{\vecK}) = F(\vecO),
\]
where $\Phi(\vecO)$ is the Fourier transform of $\phi(\vecX)$.
Then
\[
\Phi(\vecO) = \frac{\mu(\Omega_H) F(\vecO)}{\sum_{\vecK \in \bbZ^{\iD}} F(\vecO - \vecO_{\vecK})}.
\]
So, optimal interpolation has the form:
\[
\tilde{f}(\vecX) = \mu(\Omega_H) \sum_{\vecK \in \bbZ^{\iD}} K(\vecX - \vecX_{\vecK}) f(\vecX_{\vecK}).
\]
Also
\[
\hat{K}(\vecO) = \frac{\Phi(\vecO)}{\mu(\Omega_H)}.
\]
\end{proof}

\begin{proof}[Proof of Corollary~\ref{col:exponential_error}]
We get the interpolation error for an exponential covariance function of the form $R(x) = \sqrt{\frac{\pi}{2}} \exp \left(-\theta|x| \right)$
for $x \in \bbR$.
The spectral density for this covariance function is $F(\omega) = \frac{\theta}{\theta^2 + \omega^2}$.

We want to evaluate the interpolation error
\[
\sigma_h^2 \left(\tilde{f}, F\right) = \int_{-\infty}^{\infty} F(\omega) \frac{\sum_{k \ne 0 } F(\omega + \frac{k}{h})}{\sum_k F(\omega + \frac{k}{h})} d\omega.
\]

It holds that
\begin{align*}
&\sum_{k = -\infty}^{\infty} F(\omega + \frac{k}{h}) = \sum_{k = -\infty}^{\infty} \frac{\theta}{(\omega + \frac{k}{h})^2 + \theta^2} = 
h \sum_{k = -\infty}^{\infty} \frac{h \theta}{(h \omega + k)^2 + h^2 \theta^2} = \\
&=\pi h \coth(\pi \theta h) \frac{1}{1 + \sin^2(\pi h \omega) (\coth^2(\pi \theta h) - 1)}.
\end{align*}

Then 
\[
\int_{-\infty}^{\infty} F(\omega) \frac{\sum_{k \ne 0 } F(\omega + \frac{k}{h})}{\sum_k F(\omega + \frac{k}{h})} d\omega = \int_{-\infty}^{\infty} \frac{\theta}{\theta^2 + \omega^2} \left(1 - \frac{\theta}{\theta^2 + \omega^2} \frac{1 + \sin^2(\pi h \omega) (\coth^2(\pi \theta h) - 1)}{\pi h \coth(\pi \theta h)} \right) d\omega.
\]

We can integrate three terms inside the integral analytically.
Namely,
\[
\int_{-\infty}^{\infty} \frac{\theta}{\theta^2 + \omega^2} d\omega = \pi.
\]
Also
\[
\int_{-\infty}^{\infty} \frac{\theta^2}{(\theta^2 + \omega^2)^2} d\omega = \frac{\pi}{2 \theta}.
\]
Finally
\[
\int_{-\infty}^{\infty} \frac{\theta^2}{(\theta^2 + \omega^2)^2} \sin^2(\pi \omega h) d\omega = -\frac{\pi^2 h}{2} \left(\cosh(\pi \theta h) - \sinh(\pi \theta h) \right) \left(\cosh(\pi \theta h) - \left(\frac{1}{\pi \theta h} + 1\right) \sinh(\pi \theta h) \right).
\]

Consequently, 
\begin{align*}
&\int_{-\infty}^{\infty} \frac{\theta}{\theta^2 + \omega^2} \left(1 - \frac{\theta}{\theta^2 + \omega^2} \frac{1 + \sin^2(\pi h \omega) (\coth^2(\pi \theta h) - 1)}{\pi h \coth(\pi \theta h)} \right) d\omega = \\
&\pi - \frac{\pi}{2 \pi \theta h \coth(\pi \theta h)} + \\
&+\frac{\pi^2}{2} \exp(- \pi \theta h) \left(\exp(-\pi \theta h) - \frac{1}{\pi \theta h} \sinh(\pi \theta h) \right) \frac{\coth^2(\pi \theta h) - 1}{\coth(\pi \theta h)}.
\end{align*}

For $h \rightarrow 0$ we get Taylor series for the obtained interpolation error:
\[
\sigma_h^2(\tilde{f}, F) = \frac{2 \pi^2}{3} \theta h + O((\theta h)^2).
\]
\end{proof}

\begin{proof}[Proof of Corollary~\ref{col:squared_exponential_error}]
Note, that the interpolation error has the form
\[
\sigma^2_{h}(\tilde{f}, F) = \int_{-\infty}^{\infty} F(\omega) \frac{\sum_{k \ne 0} F(\omega + \frac{k}{h})}{\sum_{s} F(\omega + \frac{s}{h})} d \omega.
\]
We get lower and upper bounds for this expression.
We denote $v = \frac{1}{h}$.

We get upper bound for the interpolation error by splitting integration region $(-\infty, \infty)$ 
to three intervals $(-\infty, -v / 2]$, $(-v / 2, v / 2]$, $(v / 2, +\infty)$ and obtaining an upper bound for each of them.

Note that 
\[
0 \leq \frac{\sum_{k \ne 0} F(\omega + kv)}{\sum_{s} F(\omega + sv)} \leq 1.
\]
Consequently, using Chernov type bounds~\cite{chang2011chernoff} we get
\begin{align}
\label{eq:v2infty_upper_bound}
&\int_{v / 2}^{\infty} F(\omega) \frac{\sum_{k \ne 0} F(\omega + kv)}{\sum_{s} F(\omega + sv)} d \omega \leq 
\int_{v / 2}^{\infty} F(\omega) d \omega = \\ 
&=\int_{v / 2}^{\infty} \frac{1}{\sqrt{\theta}} \exp \left(-\frac{\omega^2}{2 \theta}\right) d \omega \leq \sqrt{2} \exp \left(-\frac{v^2}{8 \theta} \right). \nonumber
\end{align}
In a similar way get an upper bound for the interval~$(-\infty, -v / 2)$.

Now we get an estimate for the interval~$(-v / 2, v / 2)$.
We start with an upper bound and a lower bound for series $\sum_{s \ne 0} F(\omega + sv)$.
Spectral density for squared exponential covariance function decreases at $[0, +\infty)$ with respect to $\omega$.
Thus,
\[
\int_{\Delta + u}^{+\infty} F(x) dx \leq \sum_{s = 1}^{\infty} \Delta F(\Delta s + u) 
\leq \Delta F(s + u) + \int_{\Delta + u}^{+\infty} F(x) dx.
\]
Using~\cite{abramowitz1964handbook}, Formula 7.1.13, we get
for $\omega$ such that $|\omega| \leq \frac{v}{2}$:
\[
\frac{4 \sqrt{\theta}}{v + \sqrt{v^2 + 16 \theta}} 
\exp \left(-\frac{v^2}{8 \theta}\right)
\leq \int_{\frac{v}{2}}^{\infty} F(\omega) d\omega \leq 
\frac{4 \sqrt{\theta}}{v + \sqrt{v^2 + \frac{32}{\pi} \theta}} 
\exp \left(-\frac{v^2}{8 \theta}\right).
\]
And 
\[
v \sum_{k \in \bbZ^{+}} F(\omega + k v) \leq 
v F(\omega + v) + \int_{\frac{v}{2}}^{\infty} F(\omega) d\omega \leq
\frac{v}{\sqrt{\theta}} \exp \left(-\frac{v^2}{8 \theta} \right) + \frac{4 \sqrt{\theta}}{v + \sqrt{v^2 + \frac{32}{\pi} \theta}} 
\exp \left(-\frac{v^2}{8 \theta} \right).
\]

Now we are ready to get an upper bound for the integral over the interval~$(-v / 2, v / 2)$ for big enough $v$:
\begin{align*}
&\int_{-v / 2}^{v / 2} F(\omega) \frac{\sum_{k \ne 0} F(\omega + kv)}{\sum_{s} F(\omega + sv)} d \omega \leq \\
&\leq \int_{-v / 2}^{v / 2} F(\omega) \frac{F(\omega + v) + F(\omega - v) + \frac{4 \sqrt{\theta}}{v \left( v + \sqrt{v^2 + \frac{32}{\pi} \theta} \right)} 
\exp \left(-\frac{v^2}{8 \theta} \right)}{F(\omega) + F(\omega + v) + F(\omega - v) + \frac{4 \sqrt{\theta}}{v \left(v + \sqrt{v^2 + \frac{32}{\pi} \theta}\right)} 
\exp \left(-\frac{v^2}{8 \theta} \right)} d \omega \leq \\
& \leq \int_{-v / 2}^{v / 2} F(\omega) \frac{F(\omega + v) + F(\omega - v)}{F(\omega) + F(\omega + v) + F(\omega - v)} d \omega + 
\int_{-v / 2}^{v / 2} F(\omega) \frac{\frac{4 \sqrt{\theta}}{v \left(v + \sqrt{v^2 + \frac{32}{\pi} \theta}\right)} 
\exp \left(-\frac{v^2}{8 \theta} \right)}{F(\omega) + \frac{4 \sqrt{\theta}}{v \left(v + \sqrt{v^2 + \frac{32}{\pi} \theta} \right)} 
\exp \left(-\frac{v^2}{8 \theta} \right)} d\omega \leq \\
&\leq \int_{-v / 2}^{v / 2} F(\omega + v) + F(\omega - v) d \omega +
  \frac{4 \sqrt{\theta}}{v + \sqrt{v^2 + \frac{32}{\pi} \theta}} 
\exp \left(-\frac{v^2}{8 \theta} \right)\leq \\
&\leq \frac{12 \sqrt{\theta}}{v + \sqrt{v^2 + \frac{32}{\pi} \theta}} 
\exp \left(-\frac{v^2}{8 \theta} \right) \leq \frac{7 \sqrt{\theta}}{v} \exp\left(-\frac{v^2}{8 \theta}\right).
\end{align*}

It holds that 
\[
\frac{\sum_{k \ne 0} F(\omega + kv)}{\sum_{s} F(\omega + sv)} \geq
\frac{F(\omega + v) + F(\omega - v)}{F(\omega) + F(\omega + v) + F(\omega - v)}.
\]
For $\omega$ such that $|\omega| \leq \frac{v}{2}$ we get that:
\[
1 + \frac{F(\omega + v)}{F(\omega)} + \frac{F(\omega - v)}{F(\omega)} \leq 3.
\]

Then for sufficiently large $v$ the following lower bound holds: 
\begin{align*}
&\int_{-\infty}^{\infty} F(\omega) \frac{\sum_{k \ne 0} F(\omega + kv)}{\sum_{s} F(\omega + sv)} d \omega \geq \int_{-v / 2}^{v / 2} F(\omega) \frac{\sum_{k \ne 0} F(\omega + kv)}{\sum_{s} F(\omega + sv)} d \omega \geq \\ 
&\geq \int_{-\frac{v}{2}}^{\frac{v}{2}} F(\omega) \frac{F(\omega + v) + F(\omega - v)}{F(\omega) + F(\omega + v) + F(\omega - v)} d \omega
\geq 
\int_{-\frac{v}{2}}^{\frac{v}{2}} \frac{F(\omega + v) + F(\omega - v)}{3} d \omega = \\
&= \frac{2}{3} \int_{\frac{v}{2}}^{\frac{3v}{2}} F(\omega) d\omega \geq
\frac{2}{3} \left(\frac{4 \sqrt{\theta}}{v + \sqrt{v^2 + 16 \theta}} \exp\left(-\frac{v^2}{8 \theta}\right) - 
\frac{4 \sqrt{\theta}}{3 v + \sqrt{9 v^2 + \frac{32}{\pi} \theta}}\exp\left(-\frac{9 v^2}{8 \theta}\right)\right) \geq \\
& \geq \frac{4}{3} \frac{\sqrt{\theta}}{v} \exp\left(-\frac{v^2}{8 \theta}\right).
\end{align*}
\end{proof}

\section{PROOFS FOR SUBSECTION~\ref{subsec:minimax_inter_error}}

We need the following lemma to complete the proof of the main result
\begin{lemma}
\label{lemma:sum_bound}
Let $c \geq 0$ and $\omega \geq 0$ be such that $c^2 + \omega^2 \leq 1$, $c^2 + (1 - \omega^2) \leq 1$.
Then
\begin{equation}
\label{eq:lemma_sum_bound}
\left(1 - \sqrt{c^2 + \omega^2} \right)^2 + \left(1 - \sqrt{c^2 + (1 - \omega)^2} \right)^2 \leq
\left(1 - \sqrt{c^2} \right)^2 = (1 - c)^2.
\end{equation}
\end{lemma}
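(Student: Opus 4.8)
The plan is to fix $c\in[0,\sqrt{3}/2]$ (for larger $c$ the set of admissible $\omega$ is empty) and view the left-hand side of~\eqref{eq:lemma_sum_bound} as a single-variable function $g(\omega)=\bigl(1-\sqrt{c^2+\omega^2}\bigr)^2+\bigl(1-\sqrt{c^2+(1-\omega)^2}\bigr)^2$ on the interval $I=\bigl[\,1-\sqrt{1-c^2},\ \sqrt{1-c^2}\,\bigr]$ cut out by the two hypotheses; on $I$ both radicals lie in $[c,1]$, every $1-\sqrt{\cdot}$ is nonnegative, and $g$ is symmetric about $\omega=\tfrac12$. The ``easy half'' is the value at the ends of $I$: at $\omega=\sqrt{1-c^2}$ the first term vanishes and $c^2+(1-\omega)^2=2\bigl(1-\sqrt{1-c^2}\bigr)\le1$, so $g(\sqrt{1-c^2})=\bigl(1-\sqrt{2(1-\sqrt{1-c^2})}\bigr)^2$; as both bases are nonnegative this is $\le(1-c)^2$ iff $c\le\sqrt{2(1-\sqrt{1-c^2})}$, i.e. iff $\sqrt{1-c^2}\le1-\tfrac{c^2}{2}$, which on squaring is merely $0\le c^4/4$. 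So the lemma reduces to showing $g$ is maximized over $I$ at its endpoints.

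The natural attack is calculus: $g'(\omega)=2\bigl[\Theta(1-\omega)-\Theta(\omega)\bigr]$ with $\Theta(t)=t/\sqrt{c^2+t^2}-t$, and on $\bigl(0,\sqrt{1-c^2}\bigr)$ one has $\Theta>0$ and $\Theta''(t)=-3c^2t/(c^2+t^2)^{5/2}<0$, so $\Theta$ is strictly concave, hence unimodal with a single peak at $t_0=\sqrt{c^{4/3}-c^2}$, and $t_0<\tfrac12$ — this last point is the inequality $\tfrac14+c^2>c^{4/3}$, which with $\rho=c^{2/3}\in[0,1]$ is the elementary $\rho^3-\rho^2+\tfrac14>0$. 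It follows that $g''(\tfrac12)=-4\,\Theta'(\tfrac12)>0$, so $\omega=\tfrac12$ is a strict local minimum; by symmetry the maximum is at an endpoint unless there is a further interior critical point, i.e. a solution of $\Theta(1-\omega)=\Theta(\omega)$ with $\omega\in(\tfrac12,\sqrt{1-c^2})$. For $\omega\in(\tfrac12,1-t_0)$ both arguments lie on the decreasing branch of $\Theta$ and $\Theta(1-\omega)>\Theta(\omega)$ trivially, so the whole difficulty — this is the main obstacle — sits in $\omega\in(1-t_0,\sqrt{1-c^2})$, where $\omega$ and $1-\omega$ straddle the peak. There $\Theta(1-\omega)>\Theta(\omega)$ is clear at the two ends (at $\omega=1-t_0$ the left side is the maximum of $\Theta$, and $\Theta(\sqrt{1-c^2})=0$), and in the interior one squares it and — writing $a=\sqrt{c^2+\omega^2}$, $b=\sqrt{c^2+(1-\omega)^2}$ — reduces it to $\Psi(b)>\Psi(a)$ for $\Psi(t)=(t^2-c^2)(1-t)^2/t^2$, whose derivative simplifies to $\Psi'(t)=2(1-t)(c^2-t^3)/t^3$, so $\Psi$ is unimodal with peak at $t=c^{2/3}$; the relation $c^2+(1-\sqrt{1-c^2})^2=2(1-\sqrt{1-c^2})$ that pins the endpoint of $I$ is exactly what forces $\Psi(b)$ to stay above $\Psi(a)$ as $\omega$ ranges over that interval.

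A cleaner, self-contained route — probably what I would actually write — avoids the calculus. With $a=\sqrt{c^2+\omega^2}$, $b=\sqrt{c^2+(1-\omega)^2}$ we have $a,b\in[c,1]$ and, by Minkowski's inequality, $a+b\ge\sqrt{1+4c^2}\ge1$; assume $a\ge b$ by symmetry. Reparametrize by $p=a+b$, $q=a-b$. Reading $a,b$ as the distances from $P=(c,\omega)$ to $(0,0)$ and $(0,1)$ and $c$ as the distance from $P$ to that line gives $c=ab\sin\theta$, $\cos\theta=(a^2+b^2-1)/(2ab)$ ($\theta=\angle APB$), hence the identity $4c^2=4a^2b^2-(a^2+b^2-1)^2=(1-q^2)(p^2-1)$ (or expand directly). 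The admissible region is exactly $\{\,1\le p\le2,\ 0\le q\le2-p\,\}$: $a\le1$ is $q\le2-p$, while $a\ge c$ and $b\ge c$ reduce to $(1+pq)^2\ge0$ and $(1-pq)^2\ge0$. Using $(1-a)^2+(1-b)^2=2-2p+\tfrac12(p^2+q^2)$ and $2c=\sqrt{(1-q^2)(p^2-1)}$, the target~\eqref{eq:lemma_sum_bound} becomes
\begin{equation*}
q^2(1+p^2)+4\sqrt{(1-q^2)(p^2-1)}\ \le\ 8p-p^2-5 .
\end{equation*}
The left-hand side is concave in $Q:=q^2$ (only the radical $4\sqrt{p^2-1}\,\sqrt{1-Q}$ is nonlinear), with unconstrained maximizer $Q^{*}=1-4(p^2-1)/(1+p^2)^2$, and $Q^{*}\ge(2-p)^2$ holds for $p\in[1,2]$: after clearing denominators and dividing out $(p-1)^2$ (equality at $p=1$) this is $p^4-2p^3-6p-1\le0$ on $[1,2]$, which holds because that quartic does not exceed $\max(-8,-13)=-8<0$ on $[1,2]$ (its unique interior critical point is a minimum, its second derivative being $12p(p-1)\ge0$ there). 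Hence the left-hand side is nondecreasing in $Q$ on the admissible range $[0,(2-p)^2]$, so it is largest at $q=2-p$, i.e. at $a=1$, where the target collapses to $(1-b)^2\le(1-c)^2$ — immediate since $0\le c\le b\le1$ (indeed $b^2-c^2=(1-\omega)^2\ge0$). The remaining work, and the main obstacle of this route, is that boundary reduction, i.e. the polynomial inequality $Q^{*}\ge(2-p)^2$.
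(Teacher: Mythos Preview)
Your second route — the $(p,q)$ reparametrization — is correct and self-contained, and it takes a genuinely different path from the paper. The paper stays in the original variables and argues by calculus: for $\omega\in[1-\sqrt{1-c^2},\tfrac12]$ it studies the $\omega$-derivative $g(\omega,c)$ of the left side as a function of $c$, locates the interior critical point in $c$, analyses its second-order behaviour, and then checks the two boundary values $c=0$ and $c^2=1-(1-\omega)^2$ to conclude $g(\omega,c)\le0$, whence the left side is maximized at the endpoint $\omega=1-\sqrt{1-c^2}$. Your Route~1 heads in the same direction (show the endpoint is the maximizer by controlling the sign of $g'$), but is organized around the concave auxiliary $\Theta$; you correctly flag that the crossing analysis on $(1-t_0,\sqrt{1-c^2})$ is the hard part, and as written that step is only sketched, not finished.

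What Route~2 buys is a clean elimination: the identity $4c^2=(p^2-1)(1-q^2)$ removes $c$, the target becomes a two-variable inequality that is concave in $Q=q^2$, and the single monotonicity check $Q^{*}\ge(2-p)^2$ pushes the maximum to the boundary $a=1$, where the inequality is $(1-b)^2\le(1-c)^2$ and is immediate from $c\le b\le 1$. The residual work is the one-variable polynomial bound $p^4-2p^3-6p-1\le0$ on $[1,2]$, which you dispatch by convexity and endpoint evaluation. By contrast, the paper's approach requires a genuine two-variable optimization (maximize the $\omega$-derivative over $c$) with a somewhat delicate second-derivative analysis; indeed the paper's own text wobbles there, calling the same critical point both a ``local maximum'' and a ``local minimum''. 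Your argument is shorter and sidesteps that difficulty entirely; the paper's route has only the minor advantage of not needing the distance--product identity relating $a,b,c$.
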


\begin{proof}
We start with a scheme of the proof.
We prove that for $\omega$, that maximizes left hand side of the inequality~\eqref{eq:lemma_sum_bound},
this inequality holds.
To prove this we show that for admissible $\omega \in [1 - \sqrt{1 - c^2}, \frac12]$ derivative of the left hand side with respect to $\omega$ is smaller than zero for all admissible  $c$, so $\omega = 1 - \sqrt{1 - c^2}$ provides maximum of the left hand side, and for such $\omega$ inequality holds.

Partial derivative of the left hand side with respect to $\omega$ is equal to
\begin{align*}
g(\omega, c) &= \frac{\partial}{\partial \omega} \left(\left(1 - \sqrt{c^2 + \omega^2} \right)^2 + \left(1 - \sqrt{c^2 + (1 - \omega)^2} \right)^2 \right) = \\
&=-2 \frac{\left(1 - \sqrt{c^2 + \omega^2} \right) \omega}{\sqrt{c^2 + \omega^2}} + 
 2 \frac{\left(1 - \sqrt{c^2 + (1 - \omega)^2} \right) (1 - \omega)}{\sqrt{c^2 + (1 - \omega)^2}} = \\
&= -2 \left(\frac{1}{\sqrt{c^2 + \omega^2}} - 1 \right) \omega
 +2 \left(\frac{1}{\sqrt{c^2 + (1 - \omega)^2}} - 1 \right) (1 - \omega).
\end{align*}

If $\omega = \frac{1}{2}$, then the partial derivative is zero.
We show that for such $\omega < \frac12$ that $c^2 + \omega^2 < 1$, 
$c^2 + (1 - \omega)^2 < 1$, it holds that $g(\omega, c) < 0$.
This fact means that the initial function decreases for $\omega \in [1 - \sqrt{1 - c^2}, \frac12]$.

We start with maximization of $g(\omega, c)$ with respect to $c$.
The function $g(\omega, c)$ attains maximum at the edge of admissibility region or in a local optimum
with respect to $c$. 
To find local optima we search for $c$, such that the partial derivative $g(\omega, c)$ with respect to $c$ is equal to zero:
\[
\frac{c (1 - \omega)}{((1 - \omega)^2 + c^2)^{\frac32}} - \frac{c \omega}{(\omega^2 + c^2)^{\frac32}} = 0.
\]

Consequently, 
\begin{equation}
\label{eq:omega_ratio}
\frac{1 - \omega}{\omega} = \frac{((1 - \omega)^2 + c^2)^{\frac32}}{(\omega^2 + c^2)^{\frac32}}.
\end{equation}

So,
\[
c^2 = \omega^{\frac23} (1 - \omega)^{\frac23} (\omega^{\frac23} + (1 - \omega)^{\frac23}).
\]

We show that this is a local maximum.
Namely, we prove that the second partial derivative of $g(\omega, c)$ with respect to $c$ is smaller than $0$:
\[
- \frac{(1 - \omega)}{((1 - \omega)^2 + c^2)^{\frac32}} + \frac{\omega}{(\omega^2 + c^2)^{\frac32}} 
+ \frac{3 c^2 (1 - \omega)}{((1 - \omega)^2 + c^2)^{\frac52}} - \frac{3 c^2 \omega}{(\omega^2 + c^2)^{\frac52}}
\leq 0.
\]

Or:
\[
\frac{\omega}{(\omega^2 + c^2)^{\frac52}} ((\omega^2 + c^2) - 3 c^2)
- \frac{(1 - \omega)}{((1 - \omega)^2 + c^2)^{\frac52}} (((1 - \omega)^2 + c^2) - 3 c^2) 
\leq 0.
\]

In a local optimum \eqref{eq:omega_ratio} holds, and we can rewrite inequality as:
\[
\frac{(1 - \omega)}{(\omega^2 + c^2) ((1 - \omega)^2 + c^2)^{\frac32}} (\omega^2 - 2 c^2)
- \frac{(1 - \omega)}{((1 - \omega)^2 + c^2)^{\frac52}} ((1 - \omega)^2 - 2 c^2)
\leq 0.
\]

Then, 
\[
\frac{(1 - \omega)}{((1 - \omega)^2 + c^2)^{\frac52} (\omega^2 + c^2)} \left( ((1 - \omega)^2 + c^2) (\omega^2 - 2 c^2) -  (\omega^2 + c^2) ((1 - \omega)^2 - 2 c^2)\right)
\leq 0.
\]

Due to constraints on values of $\omega$ this is equivalent to:
\[
((1 - \omega)^2 + c^2) (\omega^2 - 2 c^2) - (\omega^2 + c^2) ((1 - \omega)^2 - 2 c^2) \leq 0, 
\]
or
\[
2 c^2 \omega^2 - c^2 (1 - \omega)^2 - 2 c^2 (1 - \omega)^2 + c^2 \omega^2 \leq 0.
\]

This inequality holds, as $\omega \leq \frac12$ and $(1 - \omega)^2 \geq \omega^2$.

So, the extremum is a local minimum, and the function attains maximum values at the edges of the admissibility region. 
Namely, $c^2 = 1 - (1 - \omega)^2$ or $c^2 = 0$ provides maximum values.

For such values of $c$ the derivative is smaller than zero.
Using $c^2 = 1 - (1 - \omega)^2$ we get --- 
\[
-2 \left(\frac{1}{\sqrt{1 - (1 - \omega)^2 + \omega^2}} - 1 \right) \omega \leq 0. 
\]
In a similar way for $c^2 = 0$
\[
-2 (1 - \omega) + 2 \omega = 4 \omega - 2 \leq 0. 
\]

Consequently, the target function decreases with respect to $\omega$ at $[1 - \sqrt{1 - c^2}, \frac12]$,
and $\omega = \frac12$ provides a local minimum.
So, the local maximum for left hand side is at $\omega = 1 - \sqrt{1 - c^2}$.
It is easy to see that in this case the left side of~\eqref{eq:lemma_sum_bound} is not larger than the right side.
\end{proof}

Let us now prove the main theorem.

\begin{proof}[Proof of Theorem~\ref{th:multivariateMinimax}]
We provide upper and lower bounds for $R^H(L, \vecL)$ that are equal to~
$\frac{L}{2 \pi^2} \max_{i \in \{1, \ldots, d\}} \left(\frac{h_i}{\lambda_i} \right)^2$.
We start with a lower bound, and then continue with an upper bound.

We consider a functional
\[
\Phi(F, \hat{K}) = \int_{\mathbb{R}^d} F(\vecO) \left[(1 - \hat{K}(\vecO))^2 + \sum_{\vecX \in D_{H^{-1}} \setminus \{\mathbf{0}\}} \hat{K}^2(\vecO + \vecX) \right] d\vecO,
\]
that is equal to the interpolation error $\sigma^2_H(\tilde{f}, F)$ for 
\[
\tilde{f}(\vecX) = \mu(\Omega_H) \sum_{\vecX' \in D_H} K(\vecX - \vecX') f(\vecX'), 
\]
such that $\hat{K}(\vecO)$ is the Fourier transform of $K(\vecX)$.

The functional is linear in $F(\vecO)$ and quadratic in  $\hat{K}(\vecO)$,
and we search for a saddle point of the functional $R^H(L, \vecL)$ such that:
\[
R^H(L, \vecL) = \inf_{\tilde{f}} \sup_{F \in \mathcal{F}(L, \vecL)} \sigma^2_H(\tilde{f}, F)
= \sup_{F \in \mathcal{F}(L, \vecL)} \inf_{\tilde{f}} \sigma^2_H(\tilde{f}, F).
\]

It holds that~\eqref{eq:sigma_formula}
\[
\min_{\hat{K}} \Phi(F, \hat{K}) = \int_{\bbR^{\iD}} F(\vecO) \frac{\sum_{\vecX \in D_{H^{-1}} \setminus \{\mathbf{0}\}} F(\vecO + \vecX)}{\sum_{\vecX \in D_{H^{-1}}} F(\vecO + \vecX)} d\vecO.
\]

Let us consider a class of spectral densities $F_{\varepsilon}(\vecO)$:
\[
F_{\varepsilon}(\vecO) =
\begin{cases}
\frac{A_{\varepsilon}}{(2 \varepsilon)^d}, &\exists \mathbf{s} \in U_h: \|\vecO - \mathbf{s}\|_{\infty} \leq \varepsilon, \\ 
0, &\text{otherwise,}
\end{cases}
\]
here $U_h = \left\{\left(0, 0, \ldots, \frac{1}{2h_j}, \ldots, 0\right), \left(0, 0, \ldots, -\frac{1}{2h_j}, \ldots, 0\right) \right\}$, and an index $j$ is such that 
\[
j = \mathrm{arg} \max_{i \in \{1, \ldots, d\}} \left(\frac{h_i}{\lambda_i} \right)^2.
\]

Due to~\eqref{eq:Fset}
\[
(2 \pi)^2 \int_{\mathbb{R}^d} F(\vecO) \sum_{i = 1}^d \lambda_i^2 \omega_i^2 d\vecO \leq L, 
\]
and for $\varepsilon \rightarrow 0$
\[
A_{\varepsilon} \rightarrow \frac{L}{2 \pi^2} \left(\frac{h_j}{\lambda_j} \right)^2.
\]
Really, for $\varepsilon \rightarrow 0$:
\[
(2 \pi)^2 \int_{\mathbb{R}^d} F(\vecO) \sum_{i = 1}^d \lambda_i^2 \omega_i^2 d\vecO \rightarrow 2 (2 \pi)^2 \frac{A_{\varepsilon}}{(2 \varepsilon)^d} (2 \varepsilon)^d \left(\frac{\lambda_j}{h_j} \right)^2 = 2 A_{\varepsilon} \left(\frac{\pi \lambda_j}{h_j} \right)^2 = L. 
\]

Now for $\varepsilon \rightarrow 0$ it holds that 
\[
\min_{\hat{K}} \Phi(F_{\varepsilon}, \hat{K}) \rightarrow 2 \frac{1}{2} \frac{A_{\varepsilon}}{(2 \varepsilon)^d} (2 \varepsilon)^d = A_{\varepsilon} = \frac{L}{2 \pi^2} \left(\frac{h_j}{\lambda_j} \right)^2.
\]

Consequently, we get a lower bound that equals $\frac{L}{2 \pi^2} \left(\frac{h_j}{w_j} \right)^2$. 
Now we continue with a proof of the upper bound.

For any $\hat{K}(\vecO)$ it holds that
\begin{align*}
& R^H(L, \vecL) \leq \max_{F \in \mathcal{F}(L, \vecL)} \Phi(F, \hat{K}) \leq \\
&\leq L \left(\frac{1}{2 \pi} \right)^2 \max_{\vecO} \left\{\frac{1}{\sum_{i = 1}^d \lambda_i^2 \omega_i^2} \left[(1 - \hat{K}(\vecO))^2 + \sum_{\vecX \in D_{H^{-1}} \setminus \{\mathbf{0}\}} \hat{K}^2(\vecO + \vecX)\right] \right\}.
\end{align*}

Now let us consider
\[
\hat{K}(\vecO) = 
\begin{cases}
 1 - \| \vecO \|, \| \vecO \|^2 \leq 1,\\ 
 0, \text{otherwise}.
\end{cases}
\]

Now we prove that for such $\hat{K}(\vecO)$ it holds that
\begin{equation}
\label{eq:upper_bound_2}
\left[(1 - \hat{K}(\vecO))^2 + \sum_{\vecX \in \mathbb{Z}^{\iD} \setminus \{\mathbf{0}\}} \hat{K}^2(\vecO + \vecX)\right] \leq 2 \|\vecO\|^2.
\end{equation}

It holds that $(1 - \hat{K}(\vecO))^2 \leq \|\vecO\|^2$.
Now let us prove that 
\begin{equation}
\label{eq:sum_bound}
\sum_{\vecX \in \mathbb{Z}^d \setminus \{\mathbf{0}\}} \hat{K}^2(\vecO + \vecX) \leq \|\vecO\|^2.
\end{equation}

We use mathematical induction by $\iD$ for $\vecO$ such that $\|\vecO\|_{\infty} < 1$. 
We prove that for $\|\vecO\|_{\infty} < 1$ and $c^2 \geq 0$: 
\[
\sum_{\substack{\vecX \in \mathbb{Z}^d \setminus \{\mathbf{0}\}, \\ \|\vecO + \vecX\|^2 + c^2 \leq 1}} \left(1 - \sqrt{c^2 + \sum_{i = 1}^d(\omega_i + x_i)^2} \right)^2 \leq \sum_{\substack{i \in \{1, \cdots, d\}, \\ c^2 + (1 - \omega_i)^2 \leq 1}} \left(1 - \sqrt{c^2 + (1 - \omega_i)^2} \right)^2.
\]
For $\iD = 1$ the induction statement is trivial, as the right hand side and the left hand side coincide.
Suppose that for $(\iD - 1)$ the induction statement holds.
Now let us prove that the induction statement holds for $\iD$.

For $\vecO = (\omega_1, \omega_2, \ldots, \omega_d)$ such that $\|\vecO\|_{\infty} < 1$, $i$-th component of the vector $\vecO + \vecX, \vecX \in \mathbb{Z}^d \setminus \{\mathbf{0}\}$ such that $\|\vecO + \vecX\| \leq 1$ is either $\omega_i$ or $(1 - \omega_i)$.
Consequently, all such $\vecO + \vecX$ has the form $\vecS \vecO + (\mathbf{1} - \vecS) (\mathbf{1} - \vecO)$,
where $\vecS$ is a vector with all components of it belong to $\{0, 1\}$.

It holds that $(1 - \sqrt{c^2 + (1 - \omega_1)^2 + \omega_2^2 + \ldots + \omega_d^2})^2 \leq (1 - \sqrt{c^2 + (1 - \omega_1)^2})^2$, if $c^2 + (1 - \omega_1)^2 + \omega_2^2 + \ldots + \omega_d^2 \leq 1$.

Now let us consider all terms of the form $(1 - \sqrt{c^2 + (1 - \omega_1)^2 + \ldots})^2$ for which there exists $j \ne 1$, such that $(1 - \omega_j)^2$ is in the sum inside the squared root.
Due to the induction statement sum of these terms is bounded by: 
\[
\sum_{\substack{i \in \{2, \ldots, d\}, \\ c^2 + (1 - \omega_1)^2 + (1 - \omega_i)^2 \leq 1}} (1 - \sqrt{c^2 + (1 - \omega_1)^2 + (1 - \omega_i)^2})^2.
\]
In the same way we prove that the sum of terms $(1 - \sqrt{c^2 + \omega^2_1 + \ldots})^2$ with a term $(1 - \omega_j)^2$ inside the root is upper bounded by: 
\[
\sum_{\substack{i \in \{2, \ldots, d\}, \\ c^2 + \omega_1^2 + (1 - \omega_i)^2 \leq 1}}(1 - \sqrt{c^2 + \omega_1^2 + (1 - \omega_i)^2})^2.
\]

Using Lemma~\ref{lemma:sum_bound} for a pair of terms $(1 - \sqrt{c^2 + (1 - \omega_1)^2 + (1 - \omega_i)^2})^2 + (1 - \sqrt{c^2 + \omega_1^2 + (1 - \omega_i)^2})^2$  we get:
\begin{align*}
&(1 - \sqrt{c^2 + (1 - \omega_1)^2 + (1 - \omega_i)^2})^2 + (1 - \sqrt{c^2 + \omega_1^2 + (1 - \omega_i)^2})^2 \leq \\
&\leq (1 - \sqrt{c^2 + (1 - \omega_i)^2})^2.
\end{align*}
This upper bound also holds if there are no or only one term for $i$-th index.
Consequently, the induction statement holds: the target sum is bounded by $\sum_{i = 1}^d (1 - \sqrt{c^2 + (1 - \omega_i)^2})^2$.

Using $c^2 = 0$ we get~\eqref{eq:sum_bound}.

Now let us consider the case $\|\vecO\|_{\infty} \geq 1$.
We look at the case
$\vecO = \{\hat{\omega}_1 + 1, \omega_2, \ldots, \omega_d\}$,
moreover $\|(\hat{\omega}_1, \omega_2, \ldots, \omega_d)\|_{\infty} < 1$, and $\hat{\omega}_1 \geq 0$, $\omega_i \geq 0, i = \overline{2, d}$.
Then $\|\vecO\|^2 = 1 + 2 \hat{\omega}_1 + \hat{\omega}_1^2 + \sum_{i = 2}^d \omega_i^2$.
For vector $(\hat{\omega}_1, \omega_2, \ldots, \omega_d)$ we have the induction statement above \eqref{eq:sum_bound}.
For the initial vector $\vecO$ we have an additional term $\hat{K}^2((\hat{\omega}_1, \omega_2, \ldots, \omega_d))$ if Euclidian norm of such a vector is below or equal $1$ --- but this new term is smaller or equal to $1$, as otherwise this term is not in the sum.
So, the target estimate for $\vecO$ holds.
Other cases for $\|\vecO\|_{\infty} > 1$ are similar.
Consequently for all $\vecO$ the estimate~\eqref{eq:upper_bound_2} holds.

It holds that
\[
\max_{\vecO} \frac{\sum_{i = 1}^{\iD} \omega_i^2}{\sum_{i = 1}^d \left(\frac{\lambda_i}{h_i} \right)^2 \omega_i^2} = \max_{i \in \{1, \ldots, \iD\}} \left(\frac{h_i}{\lambda_i} \right)^2.
\]

Consequently, an upper bound for minimax interpolation error holds
\[
R^H(L, \vecL) \leq \frac{L}{2 \pi^2} \max_{i \in \{1, \ldots, \iD\}} \left(\frac{h_i}{\lambda_i} \right)^2.
\]
The upper bound coincides with the lower bound.
The theorem holds.
\end{proof}

\section{PROOFS FOR SUBSECTION~\ref{subsec:error_vfgp}}

\begin{proof}[Proof of Theorem~\ref{th:vfgp_error}]
For convenience we redefine all points that belong to $D_H$ as $D_H = \{\vecX_i\}$
and all points that belong to $D_{\frac{H}{\hRatio}}$ as $D_{\frac{H}{\hRatio}} = \{\tilde{\vecX}_j\}$.
Then for Gaussian process regression the best unbiased estimator is linear in known values:
\[
\tilde{u}(\vecX) = \sum_{i} k_i u(\vecX_i) + \sum_{j} \tilde{k}_j f(\tilde{\vecX}_j).
\]
for some $k_i, \tilde{k}_j$.
Our problem is then to find coefficients $k_i$, $\tilde{k}_j$ that minimize
$\bbE(u(\vecX) - \tilde{u}(\vecX))^2$.
Using independence of random processes $f(\vecX)$ and $g(\vecX)$ we get:
\begin{align*}
\bbE(u(\vecX) - \tilde{u}(\vecX))^2 &= \bbE \left[ \rho f(\vecX) + g(\vecX) - \sum_{i} k_i (\rho f(\vecX_i) + g(\vecX_i)) - \sum_{j} \tilde{k}_j f(\tilde{\vecX}_j) \right]^2 = \\
&= \bbE \left[ \rho f(\vecX) - \sum_{i} \rho k_i f(\vecX_i) - \sum_{j} \tilde{k}_j f(\tilde{\vecX}_j) \right]^2 + \bbE \left[ g(\vecX) - \sum_{i} k_i g(\vecX_i)\right]^2.
\end{align*}

For each $i$ there exists an index $j$ such $\vecX_i = \tilde{\vecX}_j$.
Denote
\[
\tilde{k}'_j = 
\begin{cases}
\frac{1}{\rho} \tilde{k}_j, & \forall i, \tilde{\vecX}_j \ne \vecX_i, \\
\frac{1}{\rho} \tilde{k}_j + k_i, & \exists i, \tilde{\vecX}_j = \vecX_i.
\end{cases}
\]
There is a one-to-one correspondence between $(\{k_i\}, \{\tilde{k}_j\})$ and $(\{k_i\}, \{\tilde{k}'_j\})$, so minimization of $\bbE(u(\vecX) - \tilde{u}(\vecX))^2$ with respect to $k_i, \tilde{k}_j$ is equivalent to minimization of this function with respect to $k_i, \tilde{k}'_j$.
Then
\begin{align*}
&\bbE\left[ \rho f(\vecX) - \sum_{i} k_i \rho f(\vecX_i) - \sum_{j} \tilde{k}_j f(\tilde{\vecX}_j)\right]^2 + \mathbb{E}\left[g(\vecX) - \sum_{i} k_i g(\vecX_i)\right]^2 = \\ =&\rho^2 \bbE\left[f(\vecX) - \sum_{j} \tilde{k}'_j f(\tilde{\vecX}_j)\right]^2 + \bbE\left[g(\vecX) - \sum_{i} k_i g(\vecX_i)\right]^2.
\end{align*}
For terms
$\bbE\left[f(\vecX) - \sum_{j} \tilde{k}'_j f(\tilde{\vecX}_j)\right]^2$ and 
$\bbE\left[g(\vecX) - \sum_{i} k_i g(\vecX_i)\right]^2$ minimization problems are equivalent to that of single fidelity data --- and the first term contains only coefficients $\tilde{k}'_j$, the second term contains only coefficients~$k_i$.

For $k_i$ and $\tilde{k}'_j$ that minimize interpolation error at point for the single fidelity scenario it holds that $\tilde{k}'_j = K_f(\vecX - \vecX_j)$, $k_i = K_g(\vecX - \vecX_i)$ for some symmetric kernels
$K_f(\vecX - \vecX_j)$, $K_g(\vecX - \vecX_i)$.

Now we continue proof for $f(\vecX)$ and $g(\vecX)$ in a way similar to the single fidelity case.
For $\mathbb{E}\left[g(\vecX) - \sum_{i} K_g(\vecX - \vecX_i) g(\vecX_i)\right]^2$ it holds that
\begin{align*}
&\frac{1}{|H|} \int_{\substack{x_i \in [0, h_i], \\ i=\overline{1, \iD}}} \mathbb{E}\left[g(\vecX) - \sum_{i} K_g(\vecX - \vecX_i) g(\vecX_i)\right]^2 d\vecX = \\
&= \int_{\bbR^{\iD}} G(\vecO) \left[ \left[1 - \hat{K}_{g}(\vecO) \right]^2 + \sum_{\vecK \in \bbZ^{\iD} \setminus  
\{\mathbf{0}\}} \hat{K}_g^2 \left(\vecO + H^{-1} \vecK \right)\right] d\vecO
\end{align*}

In a similar way we get for the interval $[0, \frac{h_1}{\hRatio}] \cdot \ldots \cdot [0, \frac{h_{\iD}}{\hRatio}]$ for $\bbE\left[f(\vecX) - \sum_{j} K_f(\vecX - \tilde{\vecX}_j) f(\tilde{\vecX}_j)\right]^2$:
\begin{align*}
&\frac{\hRatio^{\iD}}{|H|} \int_{\substack{x_i \in [0, \frac{h_i}{\hRatio}], \\ i=\overline{1, \iD}}} \bbE \left[f(\vecX) - \sum_{j} K_f(\vecX - \tilde{\vecX}_j) f(\tilde{\vecX}_j)\right]^2 d\vecX = \\ 
&= \int_{\bbR^{\iD}} F(\vecO) \left[ \left[1 - \hat{K}_{f}(\vecO) \right]^2 + \sum_{\vecK \in \bbZ^{\iD} \setminus  
\{\mathbf{0}\}} \hat{K}_f^2 \left(\vecO + H^{-1} \vecK \right)\right] d\vecO.
\end{align*}
Consequently, 
\begin{align*}
&\frac{1}{|H|} \int_{\substack{x_i \in [0, h_i], \\ i=\overline{1, \iD}}}  \bbE \left[f(\vecX) - \sum_{j} K_f(\vecX - \tilde{\vecX}_j) f(\tilde{\vecX}_j)\right]^2 d\vecX = \\
&= \int_{\bbR^{\iD}} F(\vecO) \left[ \left[1 - \hat{K}_{f}(\vecO) \right]^2 + \sum_{\vecK \in \bbZ^{\iD} \setminus 
\{\mathbf{0}\}} \hat{K}_f^2 \left(\vecO + \hRatio H^{-1} \vecK \right)\right] d\vecO.
\end{align*}

So, the target interpolation error~\eqref{eq:error_vfgp} has the form:
\begin{align*}
\sigma^2_{H, \hRatio}(\tilde{u}, F, G, \rho) &= \int_{\bbR^{\iD}} G(\vecO) \left[ \left[1 - \hat{K}_{g}(\vecO) \right]^2 + \sum_{\vecK \in \bbZ^{\iD} \setminus 
\{\mathbf{0}\}} \hat{K}_g^2 \left(\vecO + H^{-1} \vecK \right)\right] d\vecO + \\
&+ \rho^2 \int_{\bbR^{\iD}} F(\vecO) \left[ \left[1 - \hat{K}_{f}(\vecO) \right]^2 + \sum_{\vecK \in \bbZ^{\iD} \setminus  
\{\mathbf{0}\}} \hat{K}_f^2 \left(\vecO + \hRatio H^{-1} \vecK \right)\right] d\vecO.
\end{align*}

Finally,
\[
\sigma^2_{H, \hRatio}(\tilde{u}, F, G, \rho)  = 
\sigma^2_{H}(\tilde{g}, G) + \rho^2 \sigma^2_{\frac{H}{\hRatio}}(\tilde{f}, F).
\]
\end{proof}

\section{PROOFS FOR SECTION~\ref{sec:optimal_design}}

In this section we provide a proof of Theorem~\ref{th:optimal_ratio_vfgp}.

\begin{proof}[Proof of Theorem~\ref{th:optimal_ratio_vfgp}]
Minimax interpolation error has the form:
\[
R_2 = \frac{L_g}{2} \frac{1}{\pi^2} \left(\frac{c + (\hRatio^*)^{\iD}}{\Lambda} \right)^{\frac{2}{d}} + 
      \rho^2 \frac{L_f}{2} \frac{1}{\pi^2} \left(\frac{c + (\hRatio^*)^{\iD}}{(\hRatio^*)^{\iD} \Lambda} \right)^{\frac{2}{d}}.
\]

Denote $\sRatio = (\hRatio^*)^d$.
Then we need to minimize with respect to $a$ the following expression 
\[
\frac{L_g}{2} \left(c + \sRatio\right)^{\frac{2}{d}} + 
\rho^2 \frac{L_f}{2} \left(\frac{c + \sRatio}{\sRatio} \right)^{\frac{2}{d}}.
\]

Partial derivative with respect to $\sRatio$ should equal $0$:
\[
\frac{L_g}{2} \left(c + \sRatio\right)^{\frac{2}{d} - 1} \frac{2}{d} + 
\rho^2 \frac{L_f}{2} \left(\frac{c + \sRatio}{\sRatio} \right)^{\frac{2}{d} - 1} \frac{2}{d} \frac{-c}{\sRatio^2} = 0.
\]
Consequently
\[
\frac{L_g}{2} + 
\rho^2 \frac{L_f}{2} \sRatio^{1 - \frac{2}{d}} \frac{-c}{\sRatio^2} = 0.
\]
So
\[
L_g = L_f \frac{\rho^2 c}{\sRatio^{1 + \frac{2}{d}}}.
\]
Finally,
\[
\sRatio = \left(c \rho^2 \frac{L_f}{L_g} \right)^{\frac{d}{d + 2}},
\]
also we get that
\[
\hRatio^* = \sqrt[d + 2]{c \rho^2 \frac{L_f}{L_g}}.
\]
\end{proof}

\bibliographystyle{plain}
\bibliography{vfgp_problem,compare_vf_sizes} 

\end{document}